\newtheorem{fact}{Statement}
\newtheorem{prop}{Proposition}
\Crefname{fact}{Statement}{Statements}
\DeclareMathOperator*{\argmin}{arg\,min}
\def\eqref#1{equation~\ref{#1}}
\def\1{\bm{1}}
\DeclareMathAlphabet{\mathsfit}{\encodingdefault}{\sfdefault}{m}{sl}
\SetMathAlphabet{\mathsfit}{bold}{\encodingdefault}{\sfdefault}{bx}{n}
\newcommand{\bfA}{{\bf A}}
\newcommand{\bfC}{{\bf C}}
\newcommand{\bfD}{{\bf D}}
\newcommand{\bfE}{{\bf E}}
\newcommand{\bfI}{{\bf I}}
\newcommand{\bfL}{{\bf L}}
\newcommand{\bfM}{{\bf M}}
\newcommand{\bfP}{{\bf P}}
\newcommand{\bfS}{{\bf S}}
\newcommand{\bfU}{{\bf U}}
\newcommand{\bfV}{{\bf V}}
\newcommand{\bfW}{{\bf W}}
\newcommand{\bfX}{{\bf X}}
\newcommand{\bfY}{{\bf Y}}
\newcommand{\bfx}{{\bf x}}
\newcommand{\bfy}{{\bf y}}
\newcommand{\bfu}{{\bf u}}
\newcommand{\bfv}{{\bf v}}
\newcommand{\bfw}{{\bf w}}
\newcommand{\bfz}{{\bf z}}
\newcommand{\bft}{{\bf t}}
\newcommand{\bfepsilon}{{\boldsymbol \epsilon}}
\newcommand{\bftheta}{{\boldsymbol \theta}}
\def\mydefb#1{\expandafter\def\csname bf#1\endcsname{\mathbf{#1}}}
\def\mydefallb#1{\ifx#1\mydefallb\else\mydefb#1\expandafter\mydefallb\fi}
\def\mydefgreek#1{\expandafter\def\csname bf#1\endcsname{\text{\boldmath$\mathbf{\csname #1\endcsname}$}}}
\def\mydefallgreek#1{\ifx\mydefallgreek#1\else\mydefgreek{#1}%
   \lowercase{\mydefgreek{#1}}\expandafter\mydefallgreek\fi}
\def\mydefb#1{\expandafter\def\csname bb#1\endcsname{\mathbb{#1}}}
\def\mydefallb#1{\ifx#1\mydefallb\else\mydefb#1\expandafter\mydefallb\fi}
\def\mydefb#1{\expandafter\def\csname cal#1\endcsname{\mathcal{#1}}}
\def\mydefallb#1{\ifx#1\mydefallb\else\mydefb#1\expandafter\mydefallb\fi}
    \pgfplotsset{compat=1.16}
\definecolor{matlab1}{RGB}{0,    114,  189} 
\definecolor{matlab2}{RGB}{217,   83,   25}
\definecolor{matlab3}{RGB}{237,  177,   32}
\definecolor{matlab4}{RGB}{126,   47,  142}
\definecolor{matlab5}{RGB}{119,  172,   48}
\definecolor{matlab6}{RGB}{77,   190,  238}
\definecolor{matlab7}{RGB}{162,   20,   47}
\renewcommand{\t} {^{\top}}                                
\newcommand{\diag} [1]  {{\rm diag\!}\left( #1 \right)}    
\renewcommand{\ker}[1]  {{\rm ker\!}\left( #1 \right)}     
\newcommand{\vertbar}{\rule[-1ex]{0.5pt}{2.5ex}}           
\newcommand{\horzbar}{\rule[.5ex]{2.5ex}{0.5pt}}           
\title{Latent Space Inference via Paired Autoencoders}
\author{Emma Hart, Bas Peters, Julianne Chung, and Matthias Chung}
\date{\today}
\begin{document}

\maketitle

\begin{abstract}
This work describes a novel data-driven latent space inference framework built on paired autoencoders to handle observational inconsistencies when solving inverse problems.  Our approach uses two autoencoders, one for the parameter space and one for the observation space, connected by learned mappings between the autoencoders' latent spaces.  These mappings enable a surrogate for regularized inversion and optimization in low-dimensional, informative latent spaces. Our flexible framework can work with partial, noisy, or out-of-distribution data, all while maintaining consistency with the underlying physical models. The paired autoencoders enable reconstruction of corrupted data, and then use the reconstructed data for parameter estimation, which produces more accurate reconstructions compared to paired autoencoders alone and end-to-end encoder-decoders of the same architecture, especially in scenarios with data inconsistencies.  We demonstrate our approaches on two imaging examples in medical tomography and geophysical seismic-waveform inversion, but the described approaches are broadly applicable to a variety of inverse problems in scientific and engineering applications.
\end{abstract}

\section{Introduction and Background}

Many scientific and engineering endeavors involve understanding the properties of a system through indirect measurements. This process often leads to what are known as \emph{inverse problems}. Unlike direct or forward problems, where causes are known and effects are predicted, inverse problems aim to infer causes from observed effects \cite{ kaipio2006statistical,engl1996regularization}.  Inference depends, next to others, on the experimental setup of how data are obtained. \bigskip

\noindent\emph{Inverse Problems.} In mathematical terms, let us consider a system described by a mathematical operator $A\colon\mathcal{X} \to \mathcal{Y}$ that maps model parameters (or controls) $\bfx \in \mathcal{X}$ to observations $\bfy \in \mathcal{Y}$ potentially contaminated with some additive unbiased noise $\bfepsilon \in \mathcal{Y}$. The forward operator $A = Q\circ F$ typically consists of solutions  $F\colon \mathcal{X} \to \mathcal{Q}$ to some governing equations  described by a partial or ordinary differential equation (PDE/ODE), which is assumed to be fully determined by its model parameters $\bfx$, and a mapping $Q\colon \mathcal{Q} \to \mathcal{Y}$ onto the observation space $\calY$, i.e.,
\begin{equation}\label{eq:ip}
    A (\bfx) + \bfepsilon = \bfy.
\end{equation}
Given the forward operator $A$, the observations $\bfy$, and some assumptions about the noise, the inverse problem aims to reconstruct the unknown model parameters $\bfx$, often termed as \emph{quantities of interest (QoI)} \cite{hansen2010discrete}.

Inverse problems are numerous and arise in a variety of applications. For instance, in medical imaging, they are used to reconstruct images of internal organs from measurements obtained through techniques like X-ray computed tomography (CT) or magnetic resonance imaging (MRI) \cite{bertero2021introduction,epstein2007introduction}. In geophysics applications, inverse methods are employed to determine the Earth's subsurface structure by analyzing seismic data \cite{zhdanov2002geophysical,haber2014computational}. In non-destructive testing, these techniques are crucial for identifying defects within materials \cite{tanaka1998inverse}. Remote sensing relies heavily on inverse problem-solving to retrieve atmospheric and surface properties from satellite observations \cite{doicu2010numerical,sgattoni2024physics}. More broadly in image processing, tasks like inpainting -- the reconstruction of missing or damaged image regions -- require inferring missing pixel values based on the surrounding image content \cite{guillemot2013image}. \bigskip

\noindent\emph{Challenges.}
A key challenge in inverse problems is their ill-posedness, meaning no model may perfectly explain the data, multiple models may fit the data equally well, or small data perturbations may cause large changes in the estimated model \cite{hadamard1902problemes}. 
Consequently, specialized techniques are essential. These include regularization, which introduces constraints or prior information to stabilize the solution; optimization, which frames the problem as finding the best-fitting model; and statistical methods, which incorporate statistical models of data and unknowns to quantify uncertainty \cite{calvetti2018inverse,hofmann2013regularization}. The study of inverse problems is vibrant and interdisciplinary, drawing upon mathematics, physics, statistics, and computer science. It plays a crucial role in advancing our understanding of complex systems and enabling new technologies across various domains.\bigskip

\noindent\emph{Data-Driven Approaches.} The emergence of data-driven approaches, particularly with the rise of deep neural networks, has significantly impacted the field of inverse problems \cite{arridge2019solving,afkham2021learning}. Traditionally, solving inverse problems relied heavily on analytical methods, numerical simulations, and optimization algorithms. These approaches often require significant computational resources and may struggle with complex or high-dimensional problems. Deep learning offers a powerful alternative by learning the mapping between measurements $\bfy$ and model parameters $\bfx$ directly from data. By training a neural network on a large dataset of paired input-output examples (e.g., measurements and corresponding ground truth models), it can learn to approximate the inverse mapping, i.e., a $\bftheta$-parameterized neural network $\Phi_\theta: \mathcal{Y} \to \mathcal{X}$, effectively bypassing the need for explicit inversion algorithms in some cases.  Methods that learn a map this way are also referred to as \emph{end-to-end inference} approaches. These approaches have shown promising results in various applications, including image reconstruction, medical imaging, and seismic inversion, offering faster and potentially more accurate solutions compared to traditional methods \cite{kim2018geophysical, ongie2020deep,pilozzi2018machine,genedy2023physics,chung2024paired,afkham2024uncertainty}. However, challenges remain, such as the need for large training datasets, ensuring generalization to unseen, partial, or corrupted data, and interpreting the learned representations within neural networks.

\bigskip

\noindent\emph{Partial Observations of Data.} 
A key drawback of using neural networks to learn end-to-end models is their problem-specific nature, necessitating re-training for any modification to the problem setup. 
Standard machine learning methods often falter in reconstructing $\bfx$ from $\bfy_{\text{sub}}=P(\bfy)$, where $P: \calY\to\calY_{\rm sub}$ is an operation that introduces errors in $\bfy_{\text{sub}}$ ($\bfy_{\text{sub}}$ is out-of-distribution of the training data) or data gaps in $\bfy_{\text{sub}} \notin \mathcal{Y}$.
Depending on network architecture and training, previously trained models on full data $\bfy$ become intractable or ineffective, due to the model's performance being highly sensitive to the distribution and characteristics of the training data. When presented with discrepant or incomplete inputs $\bfy_{\text{sub}}$, the model's assumptions are violated, rendering its predictions unreliable. Consequently, one is forced to resort to post hoc methods, such as data imputation or error correction, which can be computationally expensive and may introduce further inaccuracies.  Consider computed tomography: if a data-driven model is trained on sinograms acquired at specific angles, it is likely to perform poorly if applied to data from a patient measured at slightly different angles or with a different number of projections (resulting in a change in the dimensions of the measurement vector $\bfy_{\text{sub}}$) \cite{riis2021computed,afkham2024uncertainty}. In such instances, the direct utilization of pre-trained models proves to be unreliable or even intractable. Analogous difficulties are encountered in the geophysical domain.\bigskip

\noindent\emph{Purpose and Contributions.} 
In this paper, we aim to advance methods using autoencoders to solve inverse problems, extending the scope of their use to include a broader range of problems that arise in application.  We propose a novel latent space inference approach where missing data or other observational inconsistencies can be addressed via optimization in the latent space defined by paired autoencoders. This framework can be viewed as a latent space embedding combined with a data inference technique. Our approach offers an alternative to post hoc interpolation methods for generalization of results.  We illustrate various challenges stemming from inconsistent discretizations, incompleteness, or inaccuracies in input data. While more generally applicable, we focus our study on cases where experimental setups are inconsistent, such as varying or missing angles in computer tomography, or missing observations due to defective sensors in seismic tomography.  Examples are included to demonstrate the flexibility of our approach compared to previous autoencoder methods.

\bigskip

\noindent\emph{Structure.} This work is organized as follows. In \Cref{sec:background}, we provide an overview of variational inverse problems and end-to-end inference. We review encoder-decoder and autoencoder networks and use them to describe paired autoencoder frameworks. In \Cref{sec:lsi}, we present our novel approach, which incorporates paired autoencoders and inference in the latent space, and we include theoretical results. \Cref{sec:numerics} presents numerical experiments that demonstrate the advantages of our approach through comparisons with other methods highlighted for limited-angle or missing-angle tomography and sparse data acquisition for seismic imaging. Finally, \Cref{sec:conclusion} offers concluding remarks and outlines potential avenues for future research.

\section{Background}\label{sec:background}
\noindent\emph{Notation.}  In general, bolded capital letters are used to denote matrices (e.g., $\mathbf{D}$, $\mathbf{E}$) and bold lowercase letters (e.g., $\mathbf{x}$, $\mathbf{y}$) are used for vectors, while un-bolded letters are used for functions, operators, and scalars.  Superscripts denote the kind of network or sample (e.g., ``e" for encoder, ``d" for decoder, ``train" for training data) while subscripts denote what kind of data the object corresponds to (i.e., $\bfz_x$ is a vector in the latent space of an autoencoder trained on $\bfx$ samples).  For readability, all subscripts are kept unbolded.  In general, we use $\bfx_{\rm pred}$ to denote the parameters (QoI) predicted by any method, and $\widehat \bfx$ specifically to denote the parameters predicted by our proposed method, \textit{latent space inference}.

\bigskip

\noindent\emph{Variational Inverse Problems.} For an inverse problem \Cref{eq:ip}, we assume there exists a mapping $\Psi\colon \calY \to \calX$ such that $\Psi(\bfy) = \bfx$. This is the inverse operator, which is often approximated by some classical inversion techniques, denoted $A^{\dagger}(\bfy) \approx \bfx$. An approximate inverse operator$A^{\dagger}$ is typically difficult and computationally challenging to approximate. Inversion depends on the discretization and the challenging task of properly including prior knowledge of $\bfx$. Hence, variational inverse problems are formulated as optimization problems of the form
\begin{equation}\label{eq:varip}
    \min_x \ J(A(\bfx),\bfy) + R(\bfx), 
\end{equation}
where $J\colon \mathcal{X} \times \mathcal{Y} \to \mathbb{R}$ is a data discrepancy loss while $R\colon \mathcal{X} \to \mathbb{R}$ is an appropriate regularization functional encoding prior knowledge on $\bfx$. Letting $\bfx_{\rm pred}$ be a minimizer of \Cref{eq:varip}, then this problem formulation aims at approximating $\bfx$, i.e., $\bfx_{\rm pred} \approx \Psi(\bfy)$. \bigskip

\noindent\emph{End-to-End Inference.} As discussed above, in an end-to-end inference approach, we aim to learn a mapping between two spaces by utilizing a parameterized function, typically a neural network $\Phi_\theta\colon \calY \to \calX$. The network's parameters $\bftheta\in\mathbb{R}^p$ are trained to minimize the discrepancy between the network's output $\bfx_{\rm pred}=\Phi_\theta(\bfy)$ and the true  $\bfx = \Psi(\bfy)$. This approach is theoretically supported by Universal Approximation Theorems \cite{hornik1989multilayer,cybenko1989approximation,leshno1993multilayer}, which suggest that neural networks can approximate any continuous mapping under suitable conditions. The availability of corresponding samples from both the input and target distributions enables the network to learn the mapping. It has been shown that these data-driven methods offer flexibility in capturing complex mappings, benefit from direct data utilization, and allow for end-to-end learning of the entire mapping function. These end-to-end inference approaches encompass a diverse range of architectures, each tailored to specific tasks, including feedforward networks, Convolutional Neural Networks (CNNs), and Residual Networks (ResNets), among many others. \bigskip

\noindent\emph{Encoder-Decoder Networks.} Encoder-decoder networks are a general architectural framework designed to model mappings between input and output spaces through a two-stage process.  The encoder $e_{\theta^e}: \mathcal{Y} \to \mathcal{Z}$ maps high-dimensional input data to a lower-dimensional latent representation, and the decoder network $d_{\theta^d}:\mathcal{Z}\to\mathcal{X}$ generates an output in the target space.  These networks are parameterized by weights $\bftheta^e$ and $\bftheta^d$, which are optimized to minimize a task-specific loss, such as prediction error.  The encoder-decoder structure provides a flexible framework for learning complex, nonlinear relationships between spaces of differing dimensions, and has been widely adopted in areas including image-to-image translation and inverse problems.  Unlike other end-to-end models that learn a direct mapping in a single step, the structured bottleneck of an encoder-decoder can enable more interpretability and regularization. \bigskip

\noindent \emph{Autoencoders.} Autoencoders represent one type of encoder-decoder that is designed to learn efficient representations of data through a self-supervised learning process.  They are made of an \emph{encoder} network $e_{\theta^e}: \mathcal{X} \to \mathcal{Z}$ that maps high-dimensional input data to a lower-dimensional \emph{latent space}, and a \emph{decoder} network $d_{\theta^d}:\mathcal{Z}\to\mathcal{X}$ that reconstructs the original input from the latent representation, with $\mathcal{X} \subset \mathbb{R}^n$, and $\mathcal{Z} \subset \mathbb{R}^\ell$, typically $\ell\ll n$.  The parameters $\bftheta^e$ and $\bftheta^d$ are jointly optimized to minimize a reconstruction loss, typically measuring the discrepancy between input $\bfx$ and reconstruction $d_{\theta^d}(e_{\theta^e}(\bfx))$.  This framework exploits the natural structure of inputs to create a data-specific method of compression that aims to retain the most important input features.  
\bigskip

\noindent \emph{PAIR: Paired Autoencoders for Inference and Regularization.}
PAIR is a data-driven approach for solving inverse problems, where two autoencoders are used to efficiently represent the input and target spaces separately, and optimal mappings are learned between latent spaces. In the PAIR framework, we have autoencoders 
$$
    a_x = d_x \circ e_x \qquad \text{and} \qquad a_y = d_y \circ e_y
$$
where, for better readability, we omit trainable parameters, and subscripts denote what the autoencoder is trained with respect to.  As above, $\mathcal{X}\subset\bbR^n$ is the parameter (or control) space, while $\mathcal{Y}\subset\bbR^q$ is the observation space.  The encoders $e_x:\mathcal{X}\to\mathcal{Z}_\mathcal{X}$ and $e_y:\mathcal{Y}\to\mathcal{Z}_\mathcal{Y}$ compress inputs $\bfx$ to $\mathcal{Z}_\mathcal{X}\subset\bbR^{\ell_x}$ and  $\bfy$ to $\mathcal{Z}_\mathcal{Y}\subset\bbR^{\ell_y}$ respectively, with $\ell_x\ll n$ and $\ell_y \ll q$, while the decoders $d_x:\mathcal{Z}_\mathcal{X}\to\mathcal{X}$ and $d_y:\mathcal{Z}_\mathcal{Y}\to\mathcal{Y}$ reconstruct the original inputs from their latent representations.
By constructing an individual autoencoder for each data instance, we aim to ensure that the key properties of both the data, $\bfy$, and the model, $\bfx$, are encoded, preserving their distinct characteristics.  Additionally, let us assume we have latent space mappings,
\begin{equation}
    m^{\to} \colon \mathcal{Z}_\mathcal{X} \to \mathcal{Z}_\mathcal{Y} \qquad \text{and} \qquad m^{\gets} \colon \mathcal{Z}_\mathcal{Y} \to \mathcal{Z}_\mathcal{X},
\end{equation}
connecting the latent spaces of both autoencoders in each direction.  
We train $a_x$, $a_y$, $m^{\to}$, and $m^{\gets}$ with corresponding $\{\bfy,\bfx\}$ samples to minimize the combined loss \begin{equation}\label{eq:PAIRloss}
    \sum_{\bfy,\bfx} \|a_x(\bfx) - \bfx\|_2^2 + \|a_y(\bfy) - \bfy\|_2^2 + \|(d_x \circ m^{\gets} \circ e_y)(\bfy)-\bfx\|_2^2+ \|(d_y \circ m^{\to} \circ e_x)(\bfx)-\bfy\|_2^2
\end{equation}
We now call $a_x$ and $a_y$ \emph{paired autoencoders}, since the latent mappings enable meaningful inference across the parameter and observation spaces.  For example, a data-driven surrogate mapping of $\Phi$ can be established with 
\begin{equation}\label{eq:PAIRinverse}
d_{x}\circ m^{\gets}\circ e_y \approx \Phi.
\end{equation}
 
These paired autoencoders offer several advantages, including ease of training, high-quality inversion results, computational efficiency, and effective out-of-distribution detection.
The PAIR approach enables forward and inverse surrogate mappings and has been considered for likelihood-free estimation in inverse problems \cite{chung2024paired, chung2025latent, chung2025good}.   
Theoretical connections to Bayes risk and empirical Bayes risk minimization were investigated in \cite{hart2025paired}. To the best of our knowledge, this is the first work to consider the use of PAIR for handling problems with observational inconsistencies such as missing or corrupted data.

\section{Latent Space Inference using PAIR}\label{sec:lsi}
In this section, we describe a new approach for inference that exploits a paired autoencoder framework, coupled with inversion in the latent space, to effectively address inconsistencies in the data. Similar to likelihood-free inference techniques, our approach does not require knowledge of the forward model.  However, the added benefit is that optimization in the latent space enables improved reconstructions and a framework for handling data inconsistencies. Moreover, the general formulation provides a versatile and broadly applicable solution across diverse problem domains.

The crux of the approach lies in the observation that latent spaces of trained autoencoders can offer useful, lower-dimensional spaces for optimization and possible regularization.
If we utilize a paired autoencoder framework, latent space inference (LSI) can be stated as
\begin{equation}\label{eq:latentcomp_zx}
    \hat \bfz_x\in \argmin_{\bfz_x} \left\| (d_y \circ m^{\to})(\bfz_x)- \bfy \right\| \qquad \text{with} \quad \hat \bfx = d_x(\hat \bfz_x).
\end{equation}
Rather than optimize in the parameter latent space, we could also (or alternatively) optimize in the observation latent space, following
\begin{equation}\label{eq:latentcomp_zy}
\hat \bfz_y \in \argmin_{\bfz_y} \left\| d_y(\bfz_y)- \bfy \right\| \qquad \text{with} \quad \hat \bfx = (d_x\circ m^{\gets}) (\hat \bfz_y).
\end{equation}

\noindent To address missing or corrupted data, where we observe $\bfy_\text{sub} = P(\bfy)$ with projection $P$, we reformulate \Cref{eq:latentcomp_zx} as 
\begin{equation}\label{eq:latentcomp_zx_P}
    \hat \bfz_x\in \argmin_{\bfz_x} \left\| (P \circ d_y \circ m^{\to})(\bfz_x)- \bfy_\text{sub} \right\| \qquad\text{ with } \quad \hat \bfx = d_x(\hat \bfz_x),
\end{equation}
or analogously, \Cref{eq:latentcomp_zy} as 
\begin{equation}\label{eq:latentcomp_zy_P}
\hat \bfz_y \in \argmin_{\bfz_y} \left\| (P\circ d_y)(\bfz_y)- \bfy_\text{sub} \right\| \qquad \text{with} \quad \hat \bfx = (d_x\circ m^{\gets}) (\hat \bfz_y).
\end{equation}
Note, the above problems remain ill-posed, and the model reconstruction quality depends on the specific inverse problem, the training dataset quality, and the designs for the encoder, the decoder, and $m^\gets$. 
To address this ill-posedness, regularization can be employed in \Cref{eq:latentcomp_zx_P} and \Cref{eq:latentcomp_zy_P}, where a regularization functional encoding prior knowledge on $\bfx$ could be used on $d_x(\bfz_x)$ or $d_x(m^{\gets} \bfz_y)$ to enforce prior knowledge on the LSI solution $\widehat\bfx$.  In our numerical examples, we find that such regularization is unnecessary, and so for simplicity, we exclude it.

The numerical and computational considerations for this optimization problem are straightforward. The latent map $m^{\gets}$ is typically a simple linear operator and will not significantly contribute to the computational cost. Backpropagation through the decoder network $d_y$ and potentially $m^{\gets}$ if using \Cref{eq:latentcomp_zx} is readily available. For simplicity, we utilize formulation \Cref{eq:latentcomp_zy}. We note that empirically, optimizing in $\calZ_\calY$ provides better reconstructions of $\bfx$ than reconstructions obtained by optimizing in $\calZ_\calX$.  We aim to explain this empirical observation with theoretical results for linear constructions in \Cref{sub:linear}.   

\bigskip

\subsection{Latent Space Inference with Linear Constructions}
\label{sub:linear}
Let us consider latent space inference where the forward model $A$, the projection $P$, the partially observed $\bfy_\text{sub}$, the encoders $e_x$, $e_y$, the decoders $d_x$, $d_y$, and the latent space mappings $m^{\to}$ and $m^{\gets}$ are all linear, denoted $\bfA$, $\bfP$, $\bfE_x$, $\bfE_y$, $\bfD_x$, $\bfD_y$, $\bfM^{\to}$ and $\bfM^{\gets}$ respectively.  Let $X$ be a random variable with realizations $\bfx \in \bbR^n$, finite first moment $\bbE X = \bfmu_x$, and symmetric positive definite (SPD) second moment $\bbE XX\t=\bfGamma_x=\bfL_x\bfL_x\t$.  We let $\epsilon$ be a random variable independent from $X$ with realizations in $\bbR^q$, mean zero $\bbE \epsilon = 0$, and SPD second moment $\bfGamma_\epsilon=\bfL_\epsilon \bfL_\epsilon\t$.  From \Cref{eq:ip}, we define the random variable $Y$ with realizations $\bfy\in\bbR^q$, $\bbE Y = \bfA \bfmu_x$, and SPD second moment $\bbE YY\t = \bfGamma_y = \bfA \bfGamma_x \bfA\t + \bfGamma_\epsilon$.  We denote the singular value decomposition (SVD) of a matrix $\bfW\in\bbR^{q\times n}$ as \begin{equation*}
    \bfW = \bfU_W \bfSigma_W \bfV_W\t,
\end{equation*}
where $\bfSigma_W\in\bbR^{q\times n}$ is a diagonal matrix with the singular values $\sigma_1\geq\sigma_2\geq...\geq\sigma_{\min(q,n)}$ on its diagonal, and $\bfU_W=[\bfu_1, \bfu_2, \ldots, \bfu_q]\in\bbR^{q\times q}$ and $\bfV_W=[\bfv_1, \bfv_2, \ldots, \bfv_n]\in\bbR^{n\times n}$ are orthogonal matrices whose columns are the left and right singular vectors of $\bfW$, respectively.  We denote the truncated SVD of $\bfW$, for positive integer $r\leq\min{(q,n)}$ as 
\begin{equation*}
    \bfW_r = \bfU_{W,r} \bfSigma_{W,r} \bfV_{W,r}\t = \begin{bmatrix}
        \vertbar & \vertbar & & \vertbar \\
        \bfu_1 & \bfu_2 & \cdots & \bfu_r \\
        \vertbar & \vertbar& & \vertbar \\  
    \end{bmatrix}
    \begin{bmatrix}
        \sigma_1 & & & \\
         & \sigma_2 & & \\
         & & \ddots & \\
         & & & \sigma_r
    \end{bmatrix}
    \begin{bmatrix}
        \horzbar & \bfv_1\t & \horzbar \\
        \horzbar & \bfv_2\t & \horzbar \\
        & \vdots & \\
        \horzbar & \bfv_r\t & \horzbar \\
    \end{bmatrix}.
\end{equation*}

From \cite{hart2025paired}, in a Bayes risk minimization sense, optimal paired linear encoders, decoders, and latent space mappings are given by
\begin{align*}
    \bfE_x &= \bfU_{L_x, \ell_x}\t, \qquad\quad &\bfD_x = \bfU_{L_x, \ell_x},\quad \quad  \qquad\qquad\qquad &\bfM^{\to} = \bfU_{L_y, \ell_y}\t \bfA \bfU_{L_x, \ell_x}, \\
    \bfE_y &= \bfU_{L_y \ell_y}\t, \qquad\quad &\bfD_y = \bfU_{L_y, \ell_y}, \quad \text{and} \qquad\qquad\qquad &\bfM^{\gets} = \bfSigma_{L_x, \ell_x}^2\bfU_{L_x, \ell_x}\t \bfA\t \bfU_{L_y, \ell_y}\bfSigma_{L_y, \ell_y}^{-2}.
\end{align*}
In this linear case, the optimal latent representations in $\calZ_\calX$ and $\calZ_\calY$ from \Cref{eq:latentcomp_zx_P} and \Cref{eq:latentcomp_zy_P} can be formulated as
\begin{equation}
    \hat \bfz_x \in \argmin_{\bfz_x} \|\bfP\bfD_y\bfM^{\to} \bfz_x - \bfP\bfy\|_2^2 \qquad \text{ and } \qquad
    \hat \bfz_y \in \argmin_{\bfz_y} \|\bfP\bfD_y \bfz_y - \bfP\bfy\|_2^2
\end{equation}
where the reconstruction of $\bfx$ from optimization in $\calZ_\calX$ is $\hat\bfx_{z_x} = \bfD_x \bfz_x$ and the reconstruction of $\bfx$ from optimization in $\calZ_\calY$ is $\hat\bfx_{z_y} = \bfD_x \bfM^{\gets} \bfz_y$.  This constitutes a linear least squares problem; using $\bfW^\dagger$ to denote the Moore-Penrose pseudoinverse of $\bfW$, closed-form solutions for $\bfz_x$ and $\bfz_y$ are given by
\begin{equation}
    \hat \bfz_x = (\bfP \bfD_y \bfM^{\to})^\dagger \bfP\bfy \qquad \text{ and } \qquad \hat \bfz_y = (\bfP\bfD_y)^\dagger\bfP\bfy.
\end{equation}

Substituting optimal choices for the decoders and latent mappings in a Bayes risk minimization sense, we find
\begin{equation*}
    \hat\bfx_{z_x} = \bfU_{L_x, \ell_x}(\bfP \bfU_{L_y, \ell_y} \bfU_{L_y, \ell_y}\t \bfA \bfU_{L_x, \ell_x})^\dagger \bfP\bfy
\end{equation*}
and
\begin{equation*}
    \hat \bfx_{z_y} = \bfU_{L_x, \ell_x} \bfSigma_{L_x, \ell_x}^2\bfU_{L_x, \ell_x}\t \bfA\t \bfU_{L_y, \ell_y}\bfSigma_{L_y, \ell_y}^{-2}(\bfP \bfU_{L_y, \ell_y})^\dagger \bfP\bfy.
\end{equation*}
Note that when $\bfP$ is the identity matrix $\bfI\in\bbR^{q\times q}$, and $\ell_x = n$ and $\ell_y=q$, then $\bfU_{L_x, \ell_x}=\bfU_{L_x}$ and $\bfU_{L_y, \ell_y}=\bfU_{L_y}$ are invertible, and 
\begin{align*}
    \hat\bfx_{z_x} &= \bfU_{L_x}(\bfU_{L_y} \bfU_{L_y}\t \bfA \bfU_{L_x})^\dagger \bfy = \bfU_{L_x} \bfU_{L_x}\t \bfA^\dagger \bfU_{L_y} \bfU_{L_y}\t \bfy\\
    \hat\bfx_{z_y} &= \bfU_{L_x} \bfSigma_{L_x}^2\bfU_{L_x}\t \bfA\t \bfU_{L_y}\bfSigma_{L_y}^{-2}\bfU_{L_y}\t\bfy = \bfGamma_x \bfA\t \bfGamma_y^{-1} \bfy.    
\end{align*}
These results are consistent with \cite{hart2025paired}, and we note that the optimal solutions found in $\calZ_\calX$ and $\calZ_\calY$ are inherently different.  Specifically, the solution found in $\calZ_\calY$ includes the diagonal matrices $\bfSigma_{L_x}^2$ and $\bfSigma_{L_y}^{-2}$ with the singular values of $\bfGamma_{x}$ and $\bfGamma_{y}$, whereas the solution found in $\calZ_\calX$ does not.  Empirically, we observed that in the nonlinear setting, the solutions found by optimizing in $\calZ_\calY$ were better than those found in $\calZ_\calX$, indicating that this scaling may be critical.  Thus, we focus on latent space inference in $\calZ_\calY$.

\subsection{Residual and Error Bounds for Latent Space Inference}

Assume that linear or nonlinear paired autoencoders are trained, parameterizing $e_x$, $d_x$, $e_y$, $d_y$, $m^\gets$, and $m^\to$.  Then with additional assumptions on Lipschitz continuity and on the consistency of PAIR components, we can bound the residual $\|P(A(\widehat\bfx)) - \bfy_{\rm sub}\|$ and the reconstruction error $\|\widehat\bfx-\bfx\|$, where $\widehat\bfx$ is the LSI solution obtained via \Cref{eq:latentcomp_zy_P}.  Beginning with a simple projection bound in \Cref{fact:proj_res}, we successively build toward a general error estimate for $\|\widehat\bfx - \bfx\|$ in Proposition \ref{proposition}, and finally relate this reconstruction error to the forward-model residual in the observation space in Statement \ref{fact:actual_proj_res} to show how discrepancies between reconstructed and true signals propagate through the learned mappings.

\begin{fact}\label{fact:proj_res}
    Let $P:\calY\to\calY_{\rm sub}$ be Lipschitz continuous on the metric spaces $(\calY, \|\cdot\|)$ and $(\calY_{\rm sub}, \|\cdot\|)$ with Lipschitz constant $L_P\geq0$.  Also assume that there exists $\bfz=e_y(\bfy)\in\calZ_\calY$, and that $\|(d_y \circ e_y)(\bfw)-\bfw\|\leq\varepsilon_y$ for all $\bfw\in\calY$.  Then, $\| P(d_y(\widehat\bfz))-P(\bfy) \| \leq L_p \varepsilon_y$. 
\end{fact}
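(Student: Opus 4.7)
The plan is to chain three elementary observations: optimality of $\widehat\bfz$ in \Cref{eq:latentcomp_zy_P}, Lipschitz continuity of $P$, and the autoencoder reconstruction hypothesis applied at the specific point $\bfw = \bfy$. The idea is that the hypothesis "there exists $\bfz=e_y(\bfy)\in\calZ_\calY$" is really a feasibility statement: it supplies an explicit candidate latent code whose objective value we already control, so the minimum over $\calZ_\calY$ inherits that same bound.

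First I would make the identification explicit: $\widehat\bfz$ denotes a minimizer of $\|(P\circ d_y)(\bfz)-\bfy_{\rm sub}\|$ with $\bfy_{\rm sub}=P(\bfy)$ as in \Cref{eq:latentcomp_zy_P}. Since $e_y(\bfy)\in\calZ_\calY$ is a feasible point by hypothesis, optimality of $\widehat\bfz$ gives
\begin{equation*}
\|P(d_y(\widehat\bfz))-P(\bfy)\| \;\leq\; \|(P\circ d_y\circ e_y)(\bfy)-P(\bfy)\|.
\end{equation*}

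Next, I would apply Lipschitz continuity of $P$ with constant $L_P$ to bound the right-hand side by $L_P\,\|(d_y\circ e_y)(\bfy)-\bfy\|$, and then invoke the autoencoder reconstruction assumption at $\bfw=\bfy$ to obtain $\|(d_y\circ e_y)(\bfy)-\bfy\|\leq \varepsilon_y$. Chaining the three inequalities yields $\|P(d_y(\widehat\bfz))-P(\bfy)\|\leq L_P\varepsilon_y$, which is the claim.

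There is no genuine analytic obstacle; the argument is essentially a one-line consequence of the minimax principle combined with Lipschitz estimation. The only point requiring care is notational: one must explicitly use that the problem of interest has data $\bfy_{\rm sub}=P(\bfy)$, so that the objective evaluated at the reference latent code $e_y(\bfy)$ produces precisely the quantity $\|(P\circ d_y\circ e_y)(\bfy)-P(\bfy)\|$ that Lipschitz continuity then reduces to the assumed reconstruction tolerance $\varepsilon_y$.
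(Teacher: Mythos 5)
Your proposal is correct and follows essentially the same chain of inequalities as the paper: bound the optimal value by the value at the feasible candidate $e_y(\bfy)$, apply Lipschitz continuity of $P$, then invoke the reconstruction tolerance at $\bfw=\bfy$. If anything, your justification of the first inequality as optimality of $\widehat\bfz$ over the feasible point $e_y(\bfy)$ is more accurate than the paper's attribution of that step to the ``triangular inequality.''
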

\begin{proof}
    By triangular inequality,
    \begin{align*}
        \|P(d_y(\widehat\bfz)) - P(\bfy)\| &\leq \|P(d_y(\bfz)) - P(\bfy)\| \leq L_p \|(d_y\circ e_y)(\bfy) - \bfy \| \leq L_p \varepsilon_y.
    \end{align*}
\end{proof}

\noindent From LSI, $\widehat\bfz$ minimizes $\|P(d_y(\widehat\bfz)) - P(\bfy)\|$, and the performance of the data autoencoder $d_y\circ e_y$ bounds the magnitude of this difference.

\begin{prop}\label{proposition}
Let all the assumptions of Statement \ref{fact:proj_res} hold. Let $\mathcal S\subseteq \calY$ be a subset such that $\bfy\in\mathcal S$ and $d_y(\widehat\bfz)\in\mathcal S$, and assume that $P:\calY\to\calY_{\rm sub}$ satisfies a restricted bi-Lipschitz condition on $\mathcal S$, i.e., there exist constants
\[ 0<\alpha_P \le \beta_P < \infty \]
such that
\begin{equation}\label{eq:restricted_bilip}
    \alpha_P \|\bfw_1 - \bfw_2\| \leq \|P(\bfw_1) - P(\bfw_2)\| \leq \beta_P\|\bfw_1 - \bfw_2\|
\end{equation}
for all $\bfw_1, \bfw_2 \in \mathcal S$. Additionally, let $d_x$, $m^\gets$, and $e_y$ be Lipschitz continuous on their corresponding metric spaces with Lipschitz constants $L_{d_x}$, $L_{m^\gets}$, and $L_{e_y}$, respectively. Assume that $\|(d_x\circ e_x)(\bft) - \bft\|\leq\varepsilon_x$ for all $\bft \in \calX$, $\|(m^\gets\circ m^\to)(\bft)-\bft\|\leq\gamma_m$ for all $\bft\in\calZ_\calX$, $\|\bfy-(d_y\circ m^\to \circ e_x)(\bfx)\|\leq \delta$ for all $\bfy = A(\bfx)+\bfepsilon$, and that
\begin{equation}\label{eq:autorepre}
\|\bfv_1 - \bfv_2\| \leq \|(e_y\circ d_y)(\bfv_1)-(e_y\circ d_y)(\bfv_2)\| \qquad \text{for all } \quad \bfv_1, \bfv_2 \in \calZ_\calY.
\end{equation}
Then,
\begin{equation*}
\|\widehat\bfx - \bfx\|
\;\le\;
L_{d_x}\left(
L_{m^\gets}L_{e_y}\left(\frac{\beta_P}{\alpha_P}\,\varepsilon_y + \delta\right)
+\gamma_m\right)
+\varepsilon_x.
\end{equation*}
\end{prop}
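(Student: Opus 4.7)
My plan is to bound $\|\widehat\bfx-\bfx\|$ by successively pivoting through the spaces $\calX$, $\calZ_\calX$, $\calZ_\calY$, $\calY$, and finally $\calY_{\rm sub}$, so that each assumed constant ($\varepsilon_x$, $\gamma_m$, $\varepsilon_y$, $\delta$) is absorbed exactly once by a triangle-inequality pivot and the Lipschitz constants $L_{d_x}$, $L_{m^\gets}$, $L_{e_y}$ transport errors between successive spaces.  The terminal step effectively reverses the projection: the LSI residual provides control only in $\calY_{\rm sub}$, and the restricted bi-Lipschitz property \eqref{eq:restricted_bilip} is what lifts that control back to $\calY$ where Statement~\ref{fact:proj_res} already supplies a bound.

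In the first stage I would work on the parameter side.  Writing $\widehat\bfx=(d_x\circ m^\gets)(\widehat\bfz)$, inserting the pivot $(d_x\circ e_x)(\bfx)$, and combining the triangle inequality with $\|(d_x\circ e_x)(\bfx)-\bfx\|\le\varepsilon_x$ and the Lipschitz continuity of $d_x$ gives
\[
\|\widehat\bfx-\bfx\|\;\le\;L_{d_x}\,\|m^\gets(\widehat\bfz)-e_x(\bfx)\|+\varepsilon_x.
\]
A second pivot at $(m^\gets\circ m^\to)(e_x(\bfx))$, combined with the cycle-consistency bound $\gamma_m$ and the Lipschitz continuity of $m^\gets$, then yields
\[
\|m^\gets(\widehat\bfz)-e_x(\bfx)\|\;\le\;L_{m^\gets}\,\|\widehat\bfz-m^\to(e_x(\bfx))\|+\gamma_m.
\]

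In the second stage I would cross from $\calZ_\calY$ into $\calY$ and then into $\calY_{\rm sub}$.  Applying hypothesis \eqref{eq:autorepre} with $\bfv_1=\widehat\bfz$ and $\bfv_2=m^\to(e_x(\bfx))$ and pulling $e_y$ out by its Lipschitz constant produces
\[
\|\widehat\bfz-m^\to(e_x(\bfx))\|\;\le\;L_{e_y}\,\|d_y(\widehat\bfz)-(d_y\circ m^\to\circ e_x)(\bfx)\|.
\]
Inserting $\bfy$ as a final pivot and invoking $\|\bfy-(d_y\circ m^\to\circ e_x)(\bfx)\|\le\delta$ isolates the residual $\|d_y(\widehat\bfz)-\bfy\|$.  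Since $\bfy$ and $d_y(\widehat\bfz)$ both lie in $\mathcal S$, the lower half of \eqref{eq:restricted_bilip} combined with Statement~\ref{fact:proj_res} (applied with $L_P=\beta_P$, the upper Lipschitz constant on $\mathcal S$) gives $\|d_y(\widehat\bfz)-\bfy\|\le(\beta_P/\alpha_P)\,\varepsilon_y$.  Chaining the four inequalities from the two stages recovers the claimed bound.

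The main obstacle will be the correct use of the non-standard hypothesis \eqref{eq:autorepre}: it is a left-invertibility estimate for $e_y\circ d_y$ rather than a Lipschitz bound, so it must be invoked in the direction opposite to the surrounding Lipschitz inequalities, which essentially forces the choice and order of the pivots $(d_x\circ e_x)(\bfx)$, $(m^\gets\circ m^\to)(e_x(\bfx))$, $(d_y\circ m^\to\circ e_x)(\bfx)$, and $\bfy$.  A secondary subtlety that is easy to overlook is checking that both $\bfy$ and $d_y(\widehat\bfz)$ truly belong to $\mathcal S$ before applying the restricted bi-Lipschitz estimate for $P$; this is exactly the content of the hypothesis on $\mathcal S$, and without it the lower bound $\alpha_P\|\cdot\|$ cannot be used and the factor $\beta_P/\alpha_P$ in the final estimate has no meaning.
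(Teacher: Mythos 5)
Your proposal is correct and follows essentially the same route as the paper's proof: the same chain of pivots $(d_x\circ e_x)(\bfx)$, $(m^\gets\circ m^\to)(e_x(\bfx))$, the left-invertibility hypothesis \eqref{eq:autorepre} with the Lipschitz constant of $e_y$, the pivot at $\bfy$ absorbing $\delta$, and the lower bi-Lipschitz bound combined with Statement~\ref{fact:proj_res} (with $\beta_P$ playing the role of $L_P$) to obtain the factor $\beta_P/\alpha_P$. Nothing further is needed.
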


\begin{proof}
With the definition of $\widehat\bfx$ and the triangle inequality,
\begin{align*}
\|\widehat\bfx - \bfx\|
&\leq \|(d_x \circ m^\gets)(\widehat\bfz) - (d_x\circ e_x)(\bfx)\| + \|(d_x\circ e_x)(\bfx) - \bfx\| \\
&\leq L_{d_x}\, \|m^\gets(\widehat\bfz) - e_x(\bfx)\| + \varepsilon_x.
\end{align*}
Further,
\begin{align*}
\|m^\gets(\widehat\bfz) - e_x(\bfx)\|
&\leq \|m^\gets(\widehat\bfz) - (m^\gets\circ m^\to\circ e_x)(\bfx)\| 
      + \|(m^\gets\circ m^\to)(e_x(\bfx)) - e_x(\bfx)\| \\
&\leq L_{m^\gets}\,\|\widehat\bfz - (m^\to\circ e_x)(\bfx)\| + \gamma_m.
\end{align*}
For the latent discrepancy, using \Cref{eq:autorepre} and Lipschitz continuity of $e_y$,
\begin{align*}
\|\widehat\bfz - (m^\to\circ e_x)(\bfx)\|
&\leq \|(e_y\circ d_y)(\widehat\bfz) - (e_y\circ d_y\circ m^\to\circ e_x)(\bfx)\| \\
&\leq L_{e_y}\,\|d_y(\widehat\bfz) - (d_y\circ m^\to\circ e_x)(\bfx)\| \\
&\leq L_{e_y}\Big(\|d_y(\widehat\bfz) -\bfy \| + \| \bfy - (d_y\circ m^\to\circ e_x)(\bfx)\|\Big) \\
&\leq L_{e_y}\big(\|d_y(\widehat\bfz) -\bfy \| + \delta\big).
\end{align*}
Since $\bfy,d_y(\widehat\bfz)\in\mathcal S$, the restricted lower bound in \Cref{eq:restricted_bilip} yields
\[
\|d_y(\widehat\bfz)-\bfy\|
\leq \frac{1}{\alpha_P}\,\|P(d_y(\widehat\bfz)) - P(\bfy)\|.
\]
Combining this with the upper bound in \Cref{eq:restricted_bilip} and Statement \ref{fact:proj_res}, we obtain
\begin{align*}
\|d_y(\widehat\bfz)-\bfy\|
&\leq \frac{1}{\alpha_P}\|P(d_y(\widehat\bfz)) - P(\bfy)\|
\leq \frac{\beta_P}{\alpha_P}\varepsilon_y.
\end{align*}
Plugging into the previous estimates gives
\begin{align*}
\|\widehat\bfx - \bfx\|
&\leq L_{d_x}\Big(
L_{m^\gets}L_{e_y}(\tfrac{\beta_P}{\alpha_P}\varepsilon_y+\delta)+\gamma_m\Big)+\varepsilon_x.
\end{align*}
\end{proof}

The stability bound in Proposition \ref{proposition} relies on a restricted bi-Lipschitz property of the partial observation operator $P$ on the subset $\mathcal S$ of relevant signals, rather than on the entire space $\calY$. For masking and subsampling operators, such a restriction is unavoidable, since, in applications, $P$ is non-injective and admits a nontrivial nullspace. The lower constant $\alpha_P$ characterizes how well $P$ separates distinct elements on $\mathcal S$; when $\alpha_P$ becomes small due to severe data loss or alignment of the learned data manifold with $\ker{P}$, the inverse problem becomes ill-posed on $\mathcal S$ and the bound necessarily degrades (see Experiment in \Cref{sec:seismic},  \Cref{fig:LSI_metrics_seismic}).
In practice, the diagnostic quantities provided by the PAIR framework offer an empirical indication of whether a given observation lies within this stable regime, thus identifying scenarios where latent space inference may be needed.

For tight bounds, we assume that (i) the errors for both the data and model autoencoders are small, (ii) $(d_y\circ m^\to \circ e_x)(\cdot)$ is a good surrogate model for $A$, and (iii) the mappings between latent spaces approximately undo each other.  A similar bound can be obtained for the residual norm.

\begin{fact}\label{fact:actual_proj_res}
    Let all the assumptions of Proposition \ref{proposition} hold, and additionally assume that $A:\calX\to\calY$ is Lipschitz continuous on $(\calX, \|\cdot\|)$ and $(\calY, \|\cdot\|)$ with Lipschitz constant $L_A\geq0$.  Then, $$\|P(A(\widehat\bfx))-\bfy_{\rm sub}\|\leq \beta_P L_A\Big( L_{d_x} \left(L_{m^\gets}L_{e_y} ( \tfrac{\beta_P}{\alpha_P} \varepsilon_y + \delta) +\gamma_m\right) + \varepsilon_x \Big) + \beta_P \|\bfepsilon\|.$$
\end{fact}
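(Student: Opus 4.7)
\textbf{Proof plan for Statement \ref{fact:actual_proj_res}.} The strategy is to peel the projection off the left and reduce everything to a bound on the reconstruction error $\|\widehat\bfx-\bfx\|$, which Proposition \ref{proposition} already controls. First I would rewrite the target quantity using $\bfy_{\rm sub}=P(\bfy)$ and $\bfy = A(\bfx)+\bfepsilon$ to get
\[
\|P(A(\widehat\bfx))-\bfy_{\rm sub}\| \;=\; \|P(A(\widehat\bfx))-P(A(\bfx)+\bfepsilon)\|,
\]
so that a single application of Lipschitz continuity of $P$ removes the projection.

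Next, using the upper restricted bi-Lipschitz constant $\beta_P$ from \Cref{eq:restricted_bilip} (applied to the pair $A(\widehat\bfx)$ and $A(\bfx)+\bfepsilon=\bfy$, which we take to lie in the set $\mathcal S\subseteq \calY$ of relevant signals on which Proposition \ref{proposition} already conditions), I would bound
\[
\|P(A(\widehat\bfx))-P(\bfy)\| \;\le\; \beta_P\,\|A(\widehat\bfx)-A(\bfx)-\bfepsilon\|
\;\le\; \beta_P\,\|A(\widehat\bfx)-A(\bfx)\| + \beta_P\,\|\bfepsilon\|,
\]
where the second step is a triangle inequality separating the clean model discrepancy from the noise. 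Then Lipschitz continuity of the forward operator $A$ with constant $L_A$ converts the model discrepancy into a reconstruction error:
\[
\|A(\widehat\bfx)-A(\bfx)\| \;\le\; L_A\,\|\widehat\bfx-\bfx\|.
\]

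Finally, I would invoke Proposition \ref{proposition} to substitute the previously derived bound for $\|\widehat\bfx-\bfx\|$, producing exactly the claimed estimate. The only genuinely delicate point, and the one I would flag explicitly, is the applicability of the upper bi-Lipschitz constant $\beta_P$: \Cref{eq:restricted_bilip} is a restricted statement on $\mathcal S$, so I must either assume $A(\widehat\bfx)\in\mathcal S$ (a natural consistency condition, since $\widehat\bfx$ is the LSI reconstruction of an element already in $\mathcal S$) or, alternatively, fall back on the global Lipschitz constant $L_P$ from Statement \ref{fact:proj_res}. Modulo this bookkeeping, the argument is a straightforward chain of triangle and Lipschitz inequalities and does not require any new structural ingredients beyond those already assumed.
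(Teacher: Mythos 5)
Your argument is exactly the paper's proof: apply the upper constant $\beta_P$ to $\|P(A(\widehat\bfx))-P(A(\bfx)+\bfepsilon)\|$, use the triangle inequality to split off $\beta_P\|\bfepsilon\|$, convert $\|A(\widehat\bfx)-A(\bfx)\|$ to $L_A\|\widehat\bfx-\bfx\|$, and substitute the bound from Proposition \ref{proposition}. The subtlety you flag about whether $A(\widehat\bfx)$ lies in $\mathcal S$ is legitimate and is in fact glossed over in the paper's own proof, which applies $\beta_P$ to the pair $A(\widehat\bfx)$, $\bfy$ without comment even though $\mathcal S$ is only assumed to contain $\bfy$ and $d_y(\widehat\bfz)$.
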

\begin{proof}
    \begin{align*}
        \|P(A(\widehat\bfx))-\bfy_{\rm sub}\| &\leq \beta_P \|A(\widehat\bfx)-(A(\bfx)+\bfepsilon)\| \\
        & \leq \beta_P L_A \|\widehat\bfx - \bfx\| + \beta_P\|\bfepsilon\| \\
        & \leq \beta_P L_A\left( L_{d_x} \left(L_{m^\gets}L_{e_y} (\tfrac{\beta_P}{\alpha_P} \varepsilon_y + \delta) +\gamma_m\right) + \varepsilon_x \right) +\beta_P \|\bfepsilon\|.
    \end{align*}
\end{proof}

These results help to make explicit how each component of the paired autoencoder framework contributes to the total LSI error and residual norm.  The bound in Proposition \ref{proposition} shows that reconstruction accuracy in $\calX$ is controlled by the Lipschitz constants of the mappings and by the internal reconstruction errors $\varepsilon_x$, $\varepsilon_y$, $\gamma_m$, and $\delta$. Statement \ref{fact:actual_proj_res} extends this reasoning to the data space, demonstrating that the projected residual $\|P(A(\widehat\bfx))-\bfy_{\rm sub}\|$ grows at most linearly with these same quantities. Together, these results quantify the propagation of approximation errors through LSI and provide sufficient conditions under which the inversion remains stable and well-behaved.

\section{Numerical Results and Comparisons}
\label{sec:numerics}
In this section, we describe two example applications and demonstrate latent space inference under various scenarios of observational inconsistencies. We consider examples in computed tomography (CT) image reconstruction and in cross-well seismic inversion.  
For both examples, we consider latent space inference with PAIR by solving  \Cref{eq:latentcomp_zy_P}, and we denote our new approach `PAIR + LSI'.  We provide comparisons to paired autoencoders without latent-space optimization, which was previously shown to be superior to end-to-end inversion approaches in \cite{chung2024paired, hart2025paired}, even without observational inconsistencies.  
In the first example, we discuss metrics for out-of-distribution detection, and in the second example, we compare PAIR and PAIR + LSI for different percentages of missing data.

\subsection{Example 1: CT Imaging}

CT image reconstruction involves reconstructing an internal image of an object from X-ray measurements taken at multiple angles outside of the object.  These measurements form a sinogram, which mathematically corresponds to the Radon transform of the object.  Recovering the original anatomy requires inverting this transform, which is inherently an ill-posed problem \cite{natterer2001mathematics}.  Additionally, variability in scanner configurations across machines and institutions introduces differences in data sizes and types, which any data-driven method must account for to be widely applicable.

For this example, we use the 2DeteCT dataset \cite{kiss20232detect} to evaluate our latent space inference methods in addressing CT tasks with incomplete data.  From this dataset, we use downsampled reconstructions of the phantom ``anatomy" of nuts, dried fruits, stones, and filler materials, imaged to produce training samples that share features and contrasts similar to abdominal CT scans.  We simulate the forward CT process using `PRTomo' from the IRTools package \cite{gazzola2019ir}, and added $10\%$ white noise to each sinogram.

We train the two autoencoders $a_y = (d_y\circ e_y)$ and $a_x=(d_x\circ e_x)$ and latent space mappings $m^{\to}$ and $m^{\gets}$ together, minimizing the combined loss in \Cref{eq:PAIRloss}
for $\{\bfY, \bfX\}$ in the set of training samples $\{\bfY^{\rm train}_i, \bfX^{\rm train}_i\}_{i=1}^{\num{4900}}$, where $\bfY^{\rm train}_i\in\bbR^{96\times181}$ and $\bfX^{\rm train}_i\in\bbR^{52\times52}$.  The encoder $e_x$ consists of an initial convolutional layer, followed by three residual blocks, each made of two $3\times3$ convolutions with batch normalization and ReLU activation and downsampling by stride-2 convolution at the start of each block.  After resizing to 512 and flattening, this results in a more compact latent representation $\bfz_x=e_x(\bfX)$.  The decoder $d_x$ mirrors this process.  Starting from $\bfz_x$, it applies a fully connected linear layer to reshape to the final encoder feature size before flattening, then applies three residual blocks with upsampling (via transposed convolutions of stride 2), halving the channel dimension and doubling the spatial resolution at each stage until the original input dimension is recovered.  The encoder $e_y$ and decoder $d_y$ are similar, but each includes 6 residual blocks, rather than 3.  Both $m^{\to}$ and $m^{\gets}$ are $512\times512$ fully connected linear layers.  Training was conducted using an Adam optimizer with a learning rate of $0.001$ for 100 epochs.

For the CT example, we simulated missing data in two ways: (1) randomly missing angles, representing scenarios where measurements at certain projection angles are corrupted or unavailable; and (2) contiguous blocks of missing angles, modeling situations such as limited-angle tomography, where a range of projection angles cannot be measured. These scenarios can be represented using a projection operation (or masking matrix) that zeros out or removes columns of the full-data sinogram $\bfY$.  Let $P_{\rm rand}$ be a projection that zeros a fixed collection of 45 randomly selected columns from $\bfY$, and let $P_{\rm rect}$ be a projection that zeros a fixed block of 45 consecutive columns of $\bfY$.  Note that these are two different ways of obscuring $25\%$ of the original set of projection angles.  See the top row of \Cref{fig:2detect_ex0} for an illustration of the sinograms with and without missing data.

For the test set $\{\bfY^{\rm test}_i, \bfX^{\rm test}_i\}_{i=1}^{100}$ (not used during training), we consider three settings: reconstructing $\bfX^{\rm test}$ from full sinograms $\bfY^{\rm test}$; from sinograms with randomly missing angles, $P_{\rm rand}(\bfY^{\rm test})$; and from sinograms with a block of missing angles $P_{\rm rect}(\bfY^{\rm test})$.
For each of these datasets, we compute reconstructions using PAIR+LSI
following \Cref{eq:latentcomp_zy_P}.  Starting from an initial guess of $\bfz_0=e_y(\bfY_{\rm sub})$, optimization was conducted with an L-BFGS optimizer for ten iterations with Strong-Wolfe line search to ensure stable updating.
We compare the reconstruction performance to using PAIR directly (e.g., \Cref{eq:PAIRinverse}) and to end-to-end encoder-decoders. The end-to-end encoder-decoders have exactly the same architecture as the PAIR network's end-to-end prediction from \Cref{eq:PAIRinverse}, but the networks are parameterized by minimizing the loss
\begin{equation*}\label{eq:EDloss}
    \sum_{\bfY,\bfX} \|(d \circ m \circ e)(\bfY)-\bfX\|_{\rm F}^2
\end{equation*}
for $\{\bfY, \bfX\}$ in the three sets of training samples $\{\bfY^{\rm train}_i, \bfX^{\rm train}_i\}_{i=1}^{\num{4900}}$, in $\{P_{\rm rand}(\bfY^{\rm train}_i), \bfX^{\rm train}_i\}_{i=1}^{\num{4900}}$, and in $\{P_{\rm rect}(\bfY^{\rm train}_i), \bfX^{\rm train}_i\}_{i=1}^{\num{4900}}$ for the $Y \to X$ encoder-decoder, $P_{\rm rand}(Y) \to X$ encoder-decoder, and $P_{\rm rect}(Y) \to X$ encoder-decoder, respectively.
Training for each encoder-decoder was conducted with an Adam optimizer with learning rate $0.001$ for 100 epochs, as with the PAIR network.

 Moreover, we assess performance on an out-of-distribution dataset $\{\bfX_i^{\rm OOD},\bfY^{\rm OOD}_i\}_{i=1}^{125}$ that was provided in \cite{kiss20232detect}, where the observations were obtained using a different trail mix combination. We reconstruct $\bfX^{\rm OOD}$ from $\bfY^{\rm OOD}$, $P_{\rm rand}(\bfY^{\rm OOD})$, and $P_{\rm rect}(\bfY^{\rm OOD})$.  Lastly, we repeat experiments with $\{\bfY^{\rm test}_i, \bfX^{\rm test}_i\}_{i=1}^{100}$ and $\{\bfX_i^{\rm OOD},\bfY^{\rm OOD}_i\}_{i=1}^{125}$ reconstructing $\bfX^{\rm test}$ from $\widetilde P_{\rm rand}(\bfY^{\rm test})$ and $\widetilde P_{\rm rect}(\bfY^{\rm test})$, and $\bfX^{\rm OOD}$ from $\widetilde P_{\rm rand}(\bfY^{\rm OOD})$ and $\widetilde P_{\rm rect}(\bfY^{\rm OOD})$, where $\widetilde P_{\rm rand}$ zeros a \textit{different} fixed collection of 45 randomly selected columns than $P_{\rm rand}$ and $\widetilde P_{\rm rect}$ zeros a \textit{different} fixed block of 45 consecutive columns than $P_{\rm rect}$.

\begin{table}[ht]
\centering
\renewcommand{\arraystretch}{1.5}
\footnotesize

\begin{subtable}{\textwidth}
    \centering
    \begin{tabular}{|l|cc|cc|}
    \hline
         \multirow{2}{*}{Method} & \multicolumn{2}{c|}{$\bfY$} & \multicolumn{2}{c|}{$\bfY^{\rm OOD}$} \\
         & RRE $\downarrow$ & SSIM $\uparrow$ & RRE $\downarrow$ & SSIM $\uparrow$ \\
         \hline
         PAIR & \textbf{0.224} & \textbf{0.894} & \textbf{0.195} & \textbf{0.912} \\
         $\bfY \to \bfX$ Encoder-Decoder & 0.291 & 0.850 & 0.256 & 0.867 \\
         PAIR + LSI & 0.270 & 0.844 & 0.266 & 0.856 \\
    \hline
    \end{tabular}
    \caption{\textit{Results from full observational data.
    }}
    \label{tab:results_full_a}
\end{subtable}

\vspace{0.5cm}

\begin{subtable}{\textwidth}
    \centering
    \renewcommand{\arraystretch}{1.75}
    \begin{tabular}{|l|cc|cc|cc|cc|}
    \hline
         \multirow{2}{*}{Method} & \multicolumn{2}{c|}{$P_{\rm rand}(\bfY)$} & \multicolumn{2}{c|}{$P_{\rm rand}\left(\bfY^{\rm OOD}\right)$} & \multicolumn{2}{c|}{$\widetilde P_{\rm rand}(\bfY)$} & \multicolumn{2}{c|}{$\widetilde P_{\rm rand}\left(\bfY^{\rm OOD}\right)$} \\
         & RRE $\downarrow$ & SSIM $\uparrow$ & RRE $\downarrow$ & SSIM $\uparrow$& RRE $\downarrow$ & SSIM $\uparrow$ & RRE $\downarrow$ & SSIM $\uparrow$ \\
         \hline
         PAIR & 0.334 & 0.787 & 0.356 & 0.766 & 0.330 & 0.788 & 0.351 & 0.769 \\
         $P_{\rm rand}(\bfY) \to \bfX$ Encoder-Decoder & 0.345 & 0.819 & 0.314 & 0.820 & 0.335 & 0.826 & 0.297 & 0.835 \\
         PAIR + LSI & \textbf{0.274} & \textbf{0.841} & \textbf{0.271} & \textbf{0.853} & \textbf{0.275} & \textbf{0.840} & \textbf{0.271} & \textbf{0.853} \\
    \hline
    \end{tabular}
    \caption{\textit{
    Results from observational data with randomly missing angles.
    }}
    \label{tab:results_full_b}
\end{subtable}

\vspace{0.5cm}

\begin{subtable}{\textwidth}
    \centering
    \renewcommand{\arraystretch}{1.75}
    \begin{tabular}{|l|cc|cc|cc|cc|}
    \hline
         \multirow{2}{*}{Method} & \multicolumn{2}{c|}{$P_{\rm rect}(\bfY)$} & \multicolumn{2}{c|}{$P_{\rm rect}(\bfY_{\rm OOD})$} & \multicolumn{2}{c|}{$\widetilde P_{\rm rect}(\bfY)$} & \multicolumn{2}{c|}{$\widetilde P_{\rm rect}(\bfY_{\rm OOD})$} \\
         & RRE $\downarrow$ & SSIM $\uparrow$ & RRE $\downarrow$ & SSIM $\uparrow$& RRE $\downarrow$ & SSIM $\uparrow$ & RRE $\downarrow$ & SSIM $\uparrow$ \\
         \hline
         PAIR & 0.862 & 0.427 & 1.043 & 0.398 & 0.9333 & 0.414 & 1.124 & 0.392 \\
         $P_{\rm rect}(\bfY) \to \bfX$ Encoder-Decoder & \textbf{0.266} & \textbf{0.862} & \textbf{0.233} & \textbf{0.881} & 0.447 & 0.693 & 0.436 & 0.681 \\
         PAIR + LSI & 0.297 & 0.823 & 0.293 & 0.832 & \textbf{0.303} & \textbf{0.816} & \textbf{0.286} & \textbf{0.835} \\
    \hline
    \end{tabular}
    \caption{\textit{Results from observational data with a block of missing angles.}}
    \label{tab:results_full_c}
\end{subtable}

\caption{Comparison of relative reconstruction errors (RRE) and SSIM values, averaged over the test set, under different experimental settings.  The header indicates what $\bfX$ is reconstructed from.  Each subtable corresponds to a different masking scheme: (a) none (e.g., full data), (b) random, and (c) block. `OOD' denotes out-of-distribution. The smallest values of average RRE and the largest values of average SSIM are bold-faced.}
\label{tab:results_all}
\end{table}

In \Cref{tab:results_all}, we provide the relative reconstruction errors, calculated as $\|\bfX_{\rm pred}- \bfX\|_{\rm F} / \|\bfX\|_{\rm F}$, where $\bfX$ is the true image and $ \bfX_{\rm pred}$ is the predicted image, averaged over the test set, for various examples.  We also include the Structural Similarity Index Measure (SSIM) to give an idea of another measure of perceived quality \cite{wang2004image}. From \Cref{tab:results_full_a}, we find that when full angles are available, the PAIR method alone outperforms the end-to-end encoder-decoder network and PAIR $+$ LSI, on both in- and out-of-distribution samples. However, the benefit of PAIR $+$ LSI is evident in the missing data regime. From \Cref{tab:results_full_b}, we see that with randomly blocked columns of $\bfY$, PAIR $+$ LSI performs the best in all cases, when $\bfY$ is in and out of distribution and for both masking projections $P_{\rm rand}$ and $\widetilde P_{\rm rand}$.  We note that the $P_{\rm rand}(\bfY) \to \bfX$ encoder-decoder also performs similarly for the data masked with $P_{\rm rand}$, with which it was trained, and with $\widetilde P_{\rm rand}$, which shares a similar structure, but corresponds to a different mask.
This is not true with the $P_{\rm rect}(\bfY) \to \bfX$ encoder-decoder, which performs significantly worse when reconstructing $\bfX$ from $\widetilde P_{\rm rect}(\bfY)$ compared to $\bfX$ from $ P_{\rm rect}(\bfY)$. From \Cref{tab:results_full_b}, we observe that PAIR $+$ LSI performs similarly on the two tasks, as we expect, and better than the encoder-decoder on reconstructing $\bfX$ from $\widetilde P_{\rm rect}(\bfY)$. \Cref{fig:2detect_ex0} shows example reconstructions from each of the methods for an in-distribution test example, while \Cref{fig:2detect_ex1_OOD} shows reconstructions of an out-of-distribution test sample. Despite being classified as out-of-distribution in the context of the CT application (e.g., using a different trail mix collection), the machine learning model successfully reconstructs these inputs with high accuracy.

\begin{figure}
    \centering
    \begin{tabular}{lcccccc}
        & $\bfY$ & $P_{\rm rand}(\bfY)$ & $P_{\rm rect}(\bfY)$ & $\widetilde P_{\rm rand}(\bfY)$ & $\widetilde P_{\rm rect}(\bfY)$ \\
         & \includegraphics[width=0.15\linewidth]{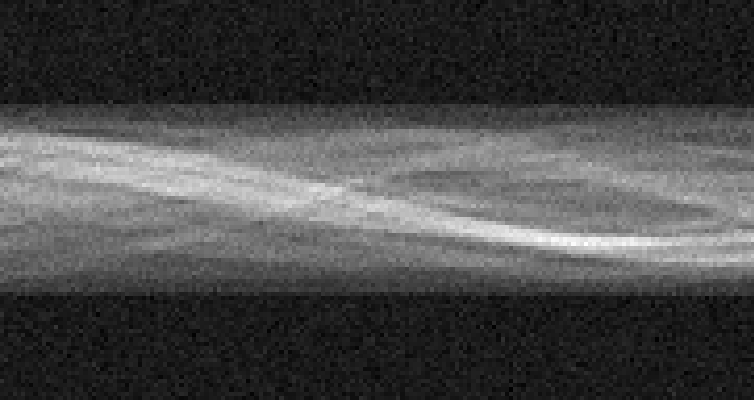}
         & \includegraphics[width=0.15\linewidth]{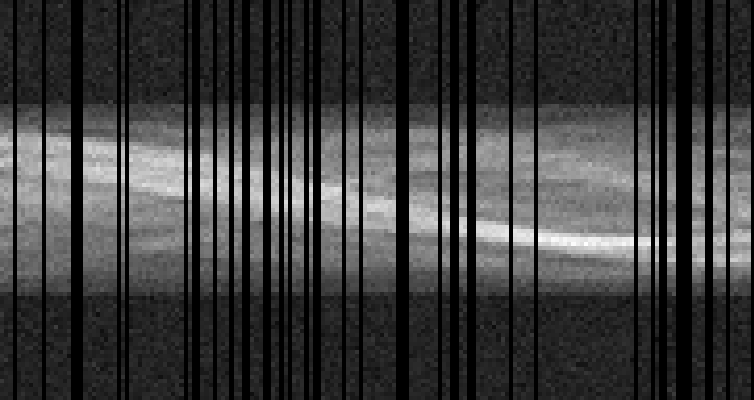}
         & \includegraphics[width=0.15\linewidth]{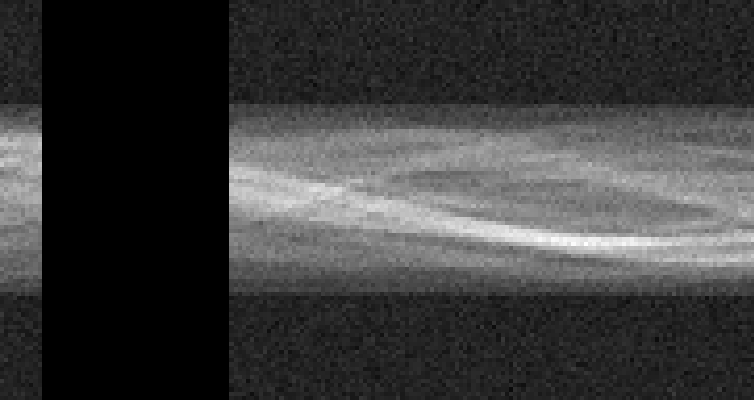}
         & \includegraphics[width=0.15\linewidth]{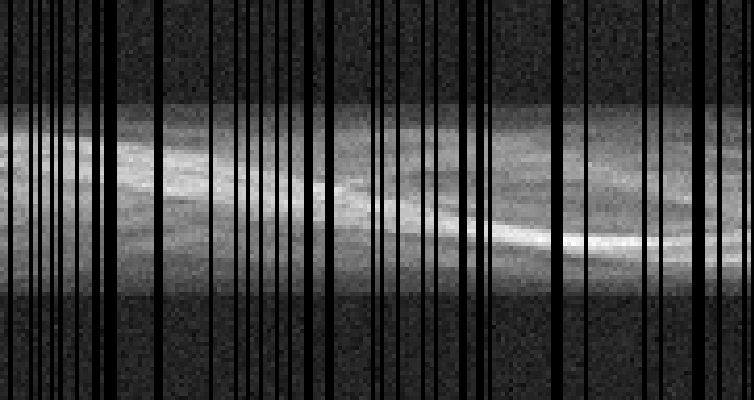}
         & \includegraphics[width=0.15\linewidth]{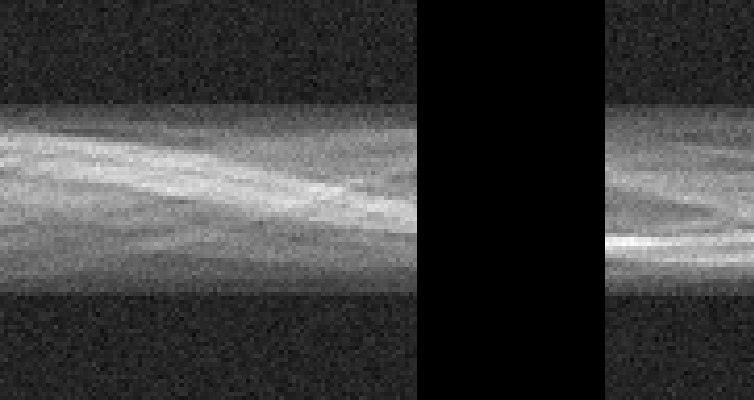}
         \\
         & \textcolor{white}{.} & & & & & \\
         \hline\\
         & \textcolor{white}{.} & & & & & \\
         \rotatebox{90}{$\qquad$ PAIR}
         & \includegraphics[width=0.15\linewidth]{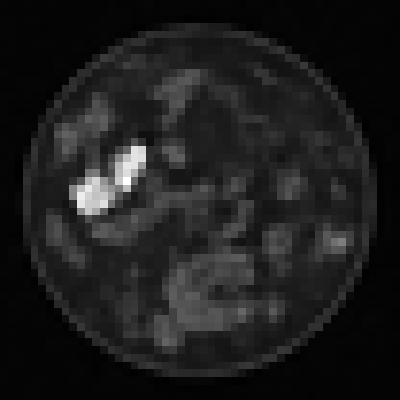}
         & \includegraphics[width=0.15\linewidth]{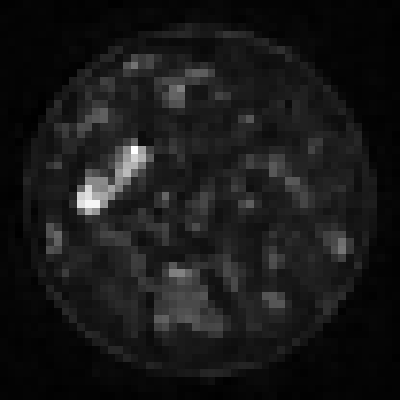}
         & \includegraphics[width=0.15\linewidth]{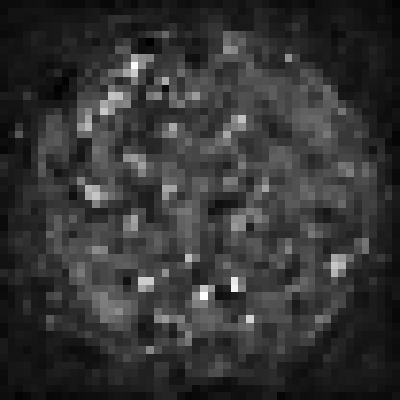}
         & \includegraphics[width=0.15\linewidth]{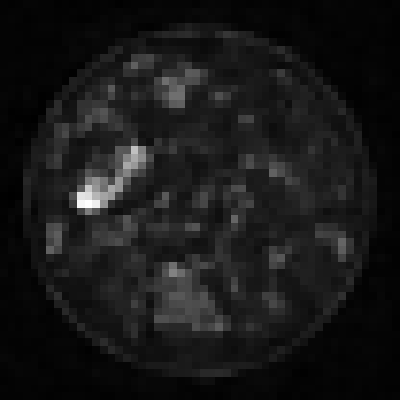}
         & \includegraphics[width=0.15\linewidth]{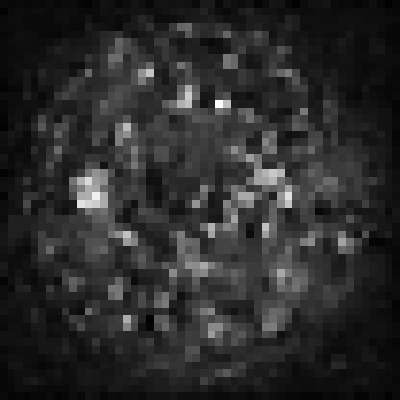}
         \\
         \rotatebox{90}{$\,\,$ En/De-coders}
         & \includegraphics[width=0.15\linewidth]{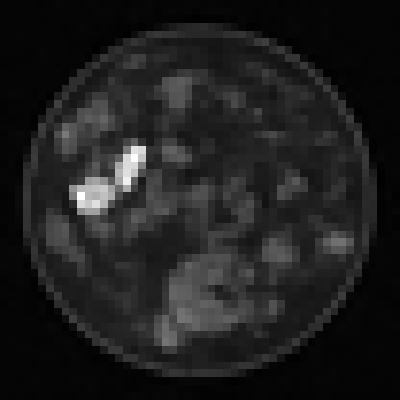}
         & \includegraphics[width=0.15\linewidth]{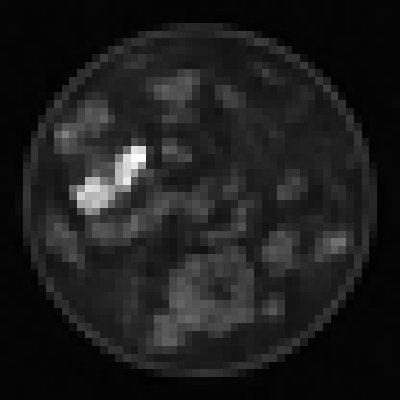}
         & \includegraphics[width=0.15\linewidth]{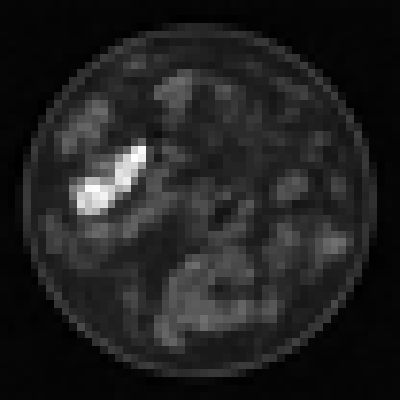}
         & \includegraphics[width=0.15\linewidth]{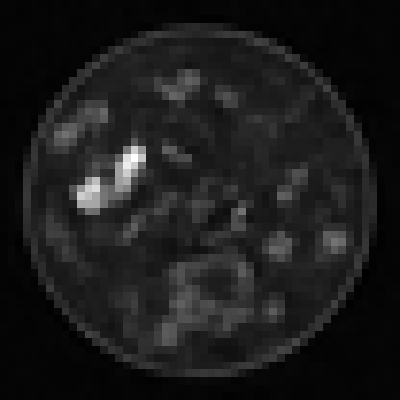}
         & \includegraphics[width=0.15\linewidth]{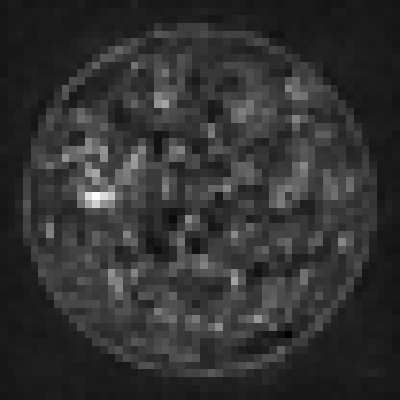}
         \\
         \rotatebox{90}{$\,\,$ PAIR + LSI}
         & \includegraphics[width=0.15\linewidth]{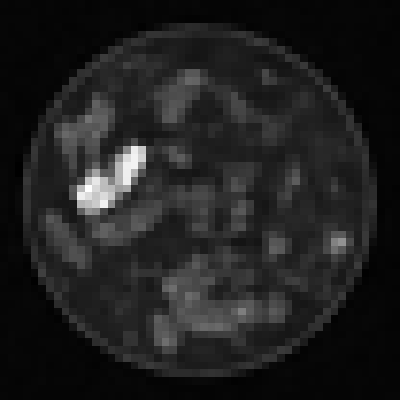}
         & \includegraphics[width=0.15\linewidth]{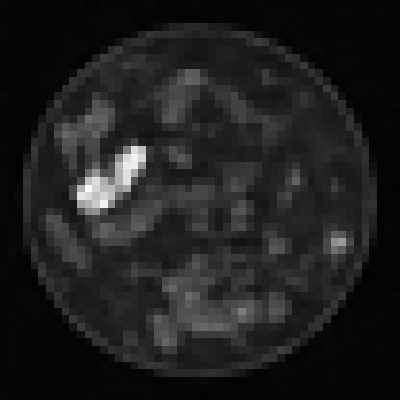}
         & \includegraphics[width=0.15\linewidth]{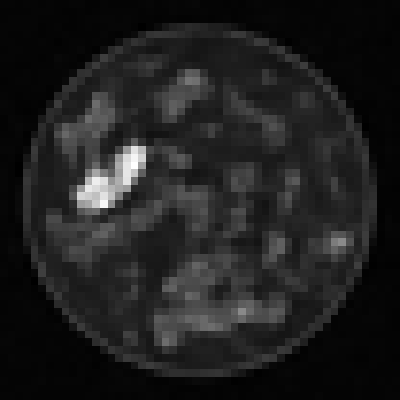}
         & \includegraphics[width=0.15\linewidth]{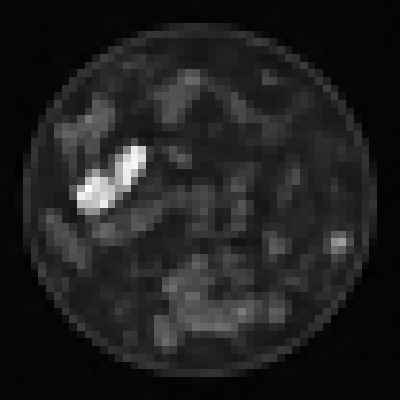}
         & \includegraphics[width=0.15\linewidth]{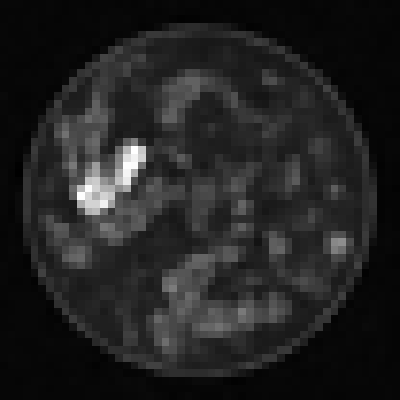}
         \\
         & \textcolor{white}{.} & & & & & \\
         & & & $\bfX$ & & \\
         & & & \includegraphics[width=0.15\linewidth]{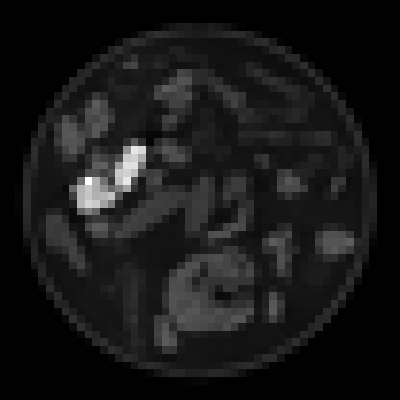} & & &\\
    \end{tabular}
    \caption{Example reconstructions of PAIR, end-to-end encoder decoders, and PAIR + LSI from sinogram $\bfY$, with different projections.  Each column header shows the input sinogram that has been corrupted with random or blocks of missing angles.  The target $\bfX$ is representative of the samples on which all networks were trained.}
    \label{fig:2detect_ex0}
\end{figure}

\begin{figure}
    \centering
    \begin{tabular}{lcccccc}
        & $\bfY$ & $P_{\rm rand}(\bfY)$ & $P_{\rm rect}(\bfY)$ & $\widetilde P_{\rm rand}(\bfY)$ & $\widetilde P_{\rm rect}(\bfY)$ \\
         & \includegraphics[width=0.15\linewidth]{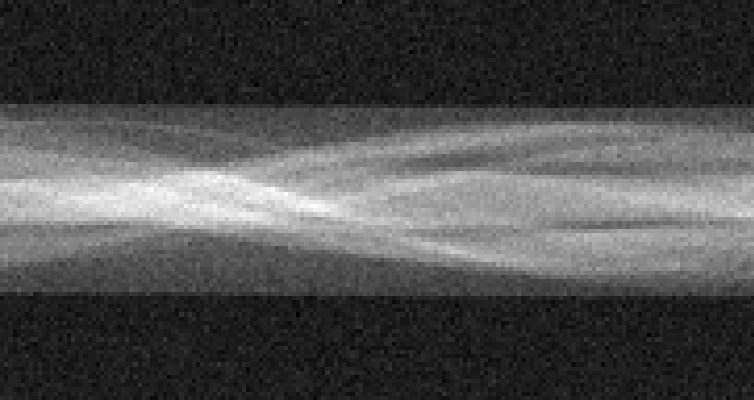}
         & \includegraphics[width=0.15\linewidth]{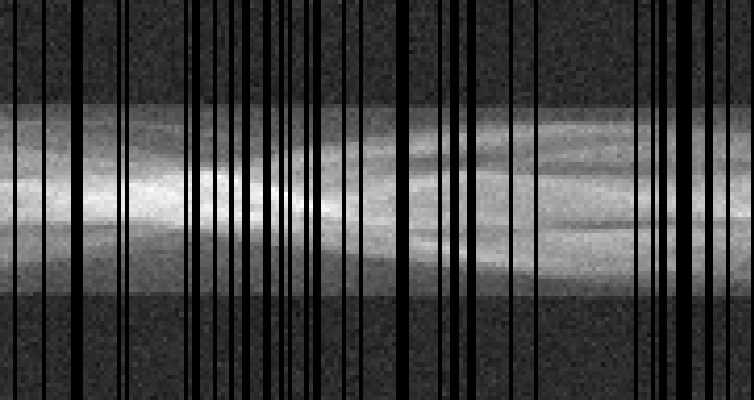}
         & \includegraphics[width=0.15\linewidth]{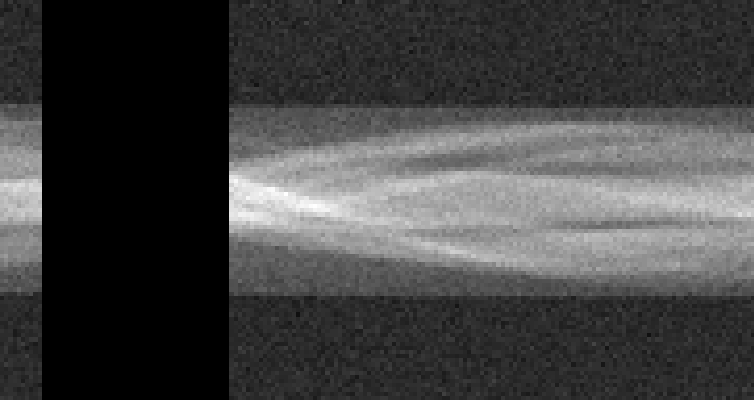}
         & \includegraphics[width=0.15\linewidth]{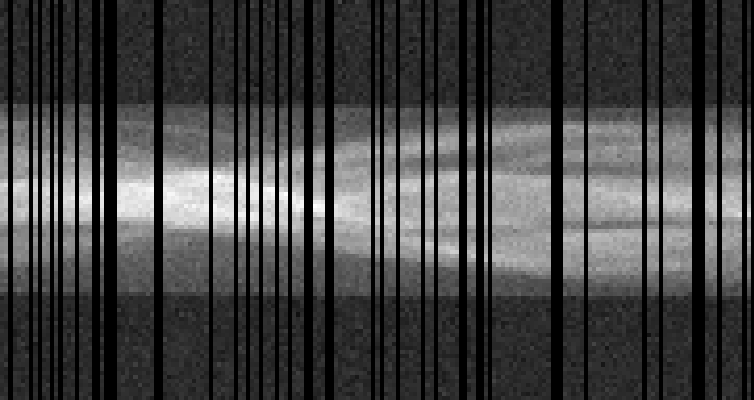}
         & \includegraphics[width=0.15\linewidth]{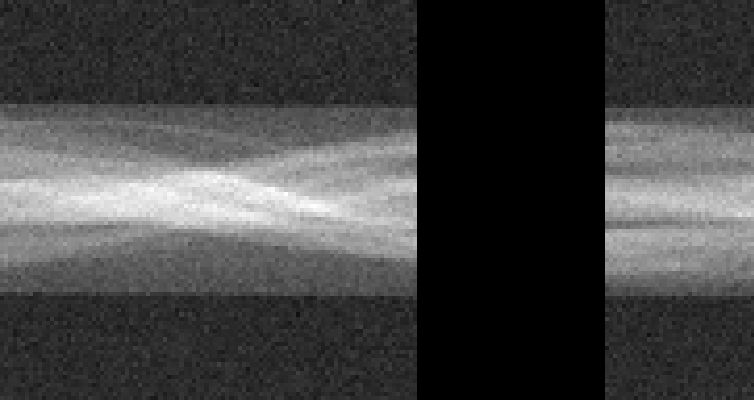}
         \\
         & \textcolor{white}{.} & & & & & \\
         \hline\\
         & \textcolor{white}{.} & & & & & \\
         \rotatebox{90}{$\qquad$ PAIR}
         & \includegraphics[width=0.15\linewidth]{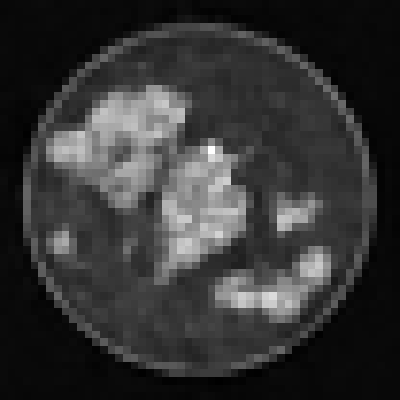}
         & \includegraphics[width=0.15\linewidth]{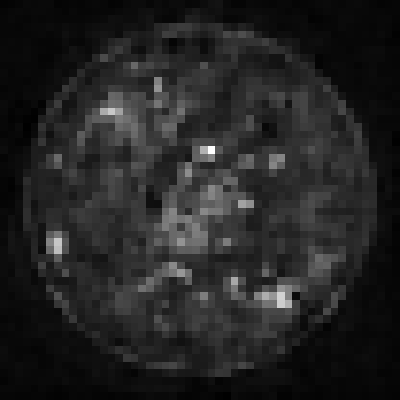}
         & \includegraphics[width=0.15\linewidth]{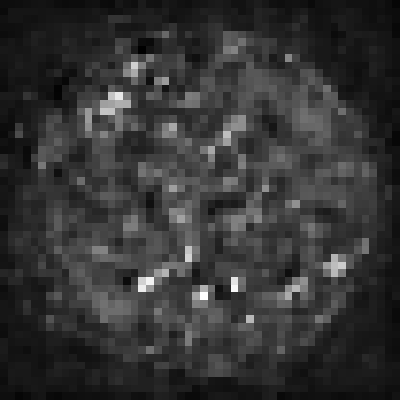}
         & \includegraphics[width=0.15\linewidth]{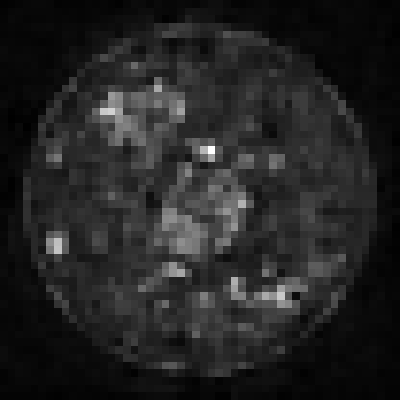}
         & \includegraphics[width=0.15\linewidth]{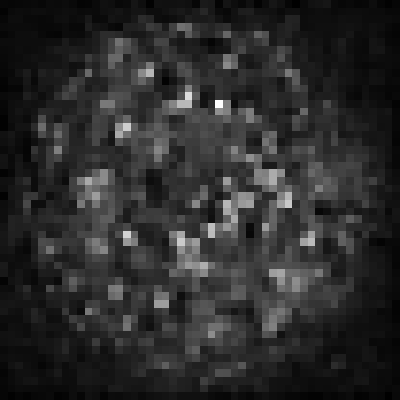}
         \\
         \rotatebox{90}{$\,\,$ En/De-coders}
         & \includegraphics[width=0.15\linewidth]{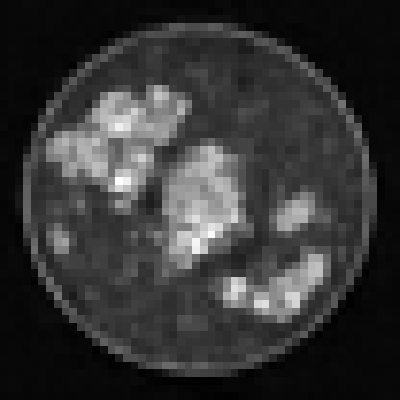}
         & \includegraphics[width=0.15\linewidth]{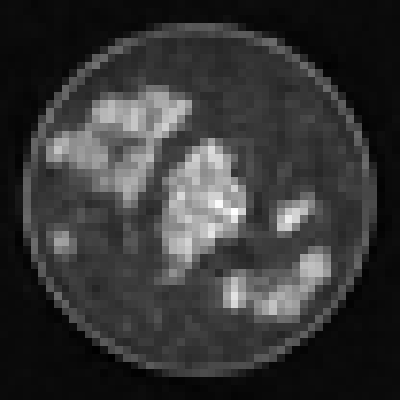}
         & \includegraphics[width=0.15\linewidth]{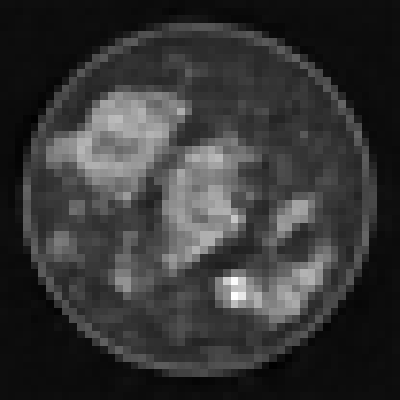}
         & \includegraphics[width=0.15\linewidth]{media/2DeteCT_Examples/ex_0/hat_rand_ED_x_0.png}
         & \includegraphics[width=0.15\linewidth]{media/2DeteCT_Examples/ex_0/hat_rect_ED_x_0.png}
         \\
         \rotatebox{90}{$\,\,$ PAIR + LSI}
         & \includegraphics[width=0.15\linewidth]{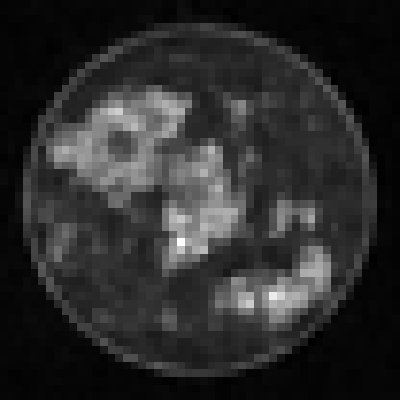}
         & \includegraphics[width=0.15\linewidth]{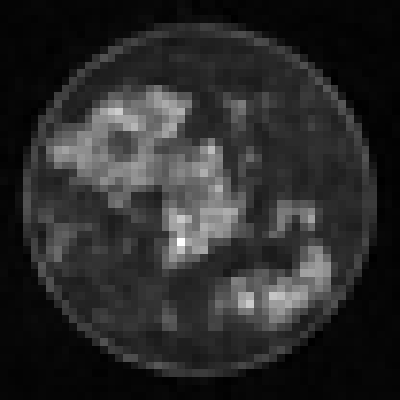}
         & \includegraphics[width=0.15\linewidth]{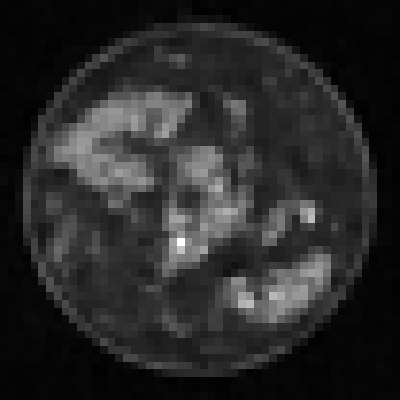}
         & \includegraphics[width=0.15\linewidth]{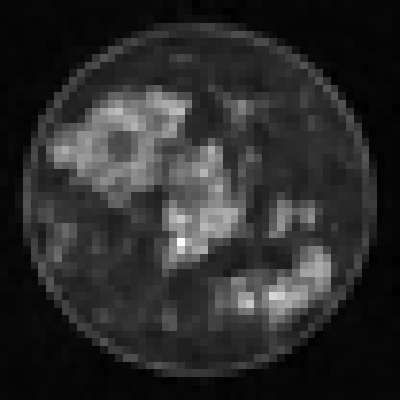}
         & \includegraphics[width=0.15\linewidth]{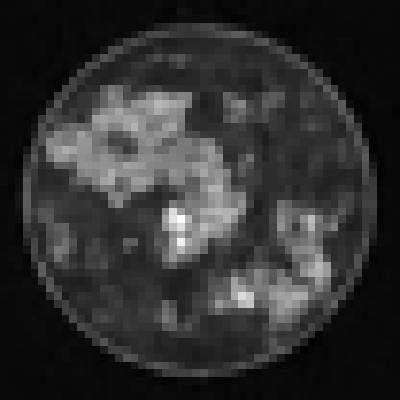}
         \\
         & \textcolor{white}{.} & & & & & \\
         & & & $\bfX$ & & \\
         & & & \includegraphics[width=0.15\linewidth]{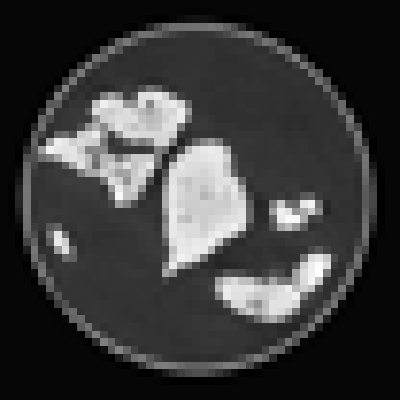} & & &\\
    \end{tabular}
    \caption{Reconstructions of an out-of-distribution test sample using PAIR, end-to-end encoder decoders, and PAIR + LSI from different observations of $\bfY$.  Each column header shows the input sinogram that has been corrupted with random or blocks of missing angles, and the OOD target $\bfX$ is provided for comparison.}
    \label{fig:2detect_ex1_OOD}
\end{figure}
    
In \Cref{tab:results_all}, we used the relative reconstruction error to evaluate the quality of a reconstruction, but this metric is not computable in practice, as we do not have access to the true solution. One of the main advantages of the PAIR framework is that it provides several cheaply computable metrics that can serve as proxies to indicate reconstruction quality \cite{hart2025paired}.  These metrics can also be used to assess whether a new observation lies within the distribution of training data; if the PAIR metrics for the new sample are similar to the metrics observed for the training data, we have some indication we can trust the prediction from the PAIR network.  If not, further investigation and refinement (e.g., PAIR $+$ LSI) may be required. 
In \Cref{fig:OODmetrics}, we use two of these metrics to illustrate the ability to detect corrupted data (in this case, missing angles) and to improve reconstructions using PAIR+LSI.  
Samples are plotted with the relative residual estimate $${\|(d_y \circ m^{\to} \circ e_x)(\bfX_{\rm pred}) - \bfY\|_{\rm F}}/{\|\bfY\|_{\rm F}}$$ on the $y$-axis and the relative autoencoded data difference $${\|d_y(\bfz_y) - \bfY\|_{\rm F}}/{\|\bfY\|_{\rm F}}$$ on the $x$-axis.  Circles represent the metrics for $100$ full-angle test samples (these are clustered in the bottom left corner of the plot), stars correspond to the same $100$ samples with a consecutive block of missing angles (clustered in the top right corner of the plot), and squares represent those same masked samples after applying LSI (clustered between the other two).  For in-distribution test samples with full angles (circles), $\bfX_{\rm pred} = (d_x\circ m^{\gets} \circ e_y)(\bfY)$ and $\bfz_y=e_y(\bfY)$; for masked angle samples (stars), $\bfX_{\rm pred} = (d_x\circ m^{\gets} \circ e_y\circ P_{\rm rect}(\bfY))$ and $\bfz_y=(e_y\circ P_{\rm rect})(\bfY)$; and for masked angles samples with LSI (squares), $\bfX_{\rm pred} = (d_x\circ m^{\gets})(\widehat\bfz_y)$ and $\bfz_y=\widehat\bfz_y$, where $\hat \bfz_y \in \argmin_{\bfz_y} \left\| (P_{\rm rect}\circ d_y)(\bfz_y)- \bfY_\text{sub} \right\|_{\rm F}$ is found using an L-BFGS optimizer with Strong-Wolfe line search for ten iterations.  Grey lines connect corresponding missing angle samples before and after LSI, and the color of each point indicates the RRE.

\begin{figure}[ht]
    \centering
    \begin{tikzpicture}
        \node[anchor=south west, inner sep=0] (img)
            {\includegraphics[width=0.9\linewidth, trim={0.7cm 0.8cm 2.2cm 0cm}, clip]{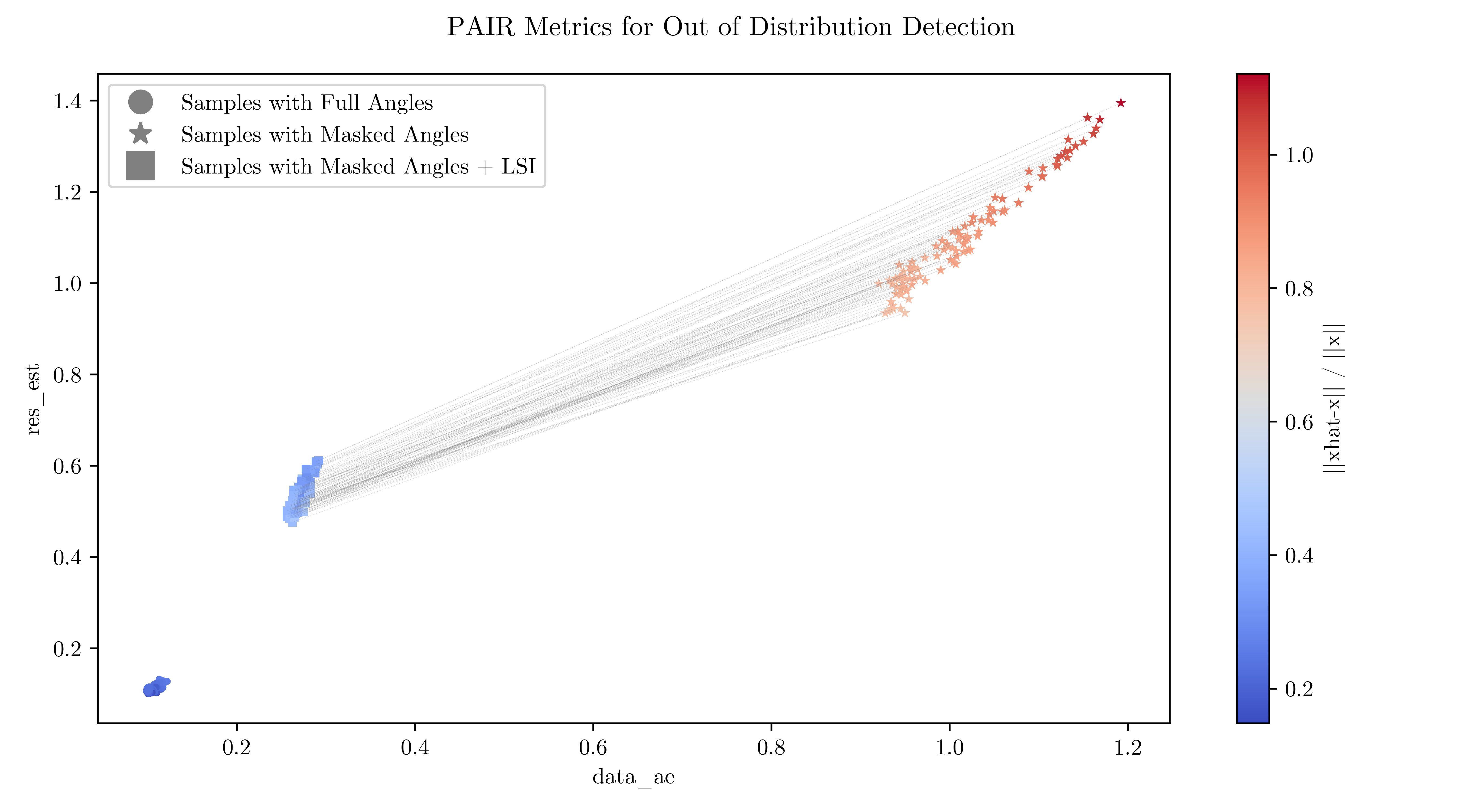}};

        \node[below=0.1cm of img.south] {\small ${\|d_y(\bfz_y) - \bfY\|_2}/{\|\bfY\|_2}$};

        \node[rotate=90, anchor=center]
            at ([xshift=-0.5cm]img.west|-img.center)
            {\small ${\|(d_y \circ m^{\to} \circ e_x)(\bfX_{\rm pred}) - \bfY\|_2}/{\|\bfY\|_2}$};

        \node[rotate=90, anchor=center]
            at ([xshift=0.6cm]img.east|-img.center)
            {\small $\|\bfX_{\rm pred}- \bfX\|_2 / \|\bfX\|_2$};
    \end{tikzpicture}

    \caption{PAIR out-of-distribution metrics for in-distribution samples with full observational data, samples with a block of masked angles, and samples with a block of masked angles reconstructed using PAIR+LSI.}
    \label{fig:OODmetrics}
\end{figure}

We observe three well-separated spatial clusters: one for the full angle samples, one for the masked angle samples with PAIR alone, and one for the masked angle samples with PAIR $+$ LSI. The errors are also clearly clustered;  the error is highest on samples with masked angles when PAIR is used without LSI, and the error is lowest when PAIR is used with full angles.  Although it does not achieve the same accuracy as in the full angle case, PAIR $+$ LSI clearly reduces the reconstruction errors associated with missing angles, since the cluster of squares is notably closer to the cluster of circles.

\subsection{Example 2: Cross-Well Seismic Inversion} \label{sec:seismic}

Missing or incomplete data are major problems that also arise in applications such as seismic inversion, known as Full Waveform Inversion (FWI, \cite{tarantola2005inverse,VirieuxOperto2009}). Due to the cost of deploying the seismic source and receivers, data acquisition may be deliberately incomplete \cite{SubsamplingHerrmann, SubsamplingBaumstein, haber2012effective}. 
This example serves two purposes: to show that the proposed PAIR framework with data reconstruction also applies to nonlinear inverse problems, and to extend the proposed approach to probabilistic settings, as introduced in \cite{chung2025good}, yielding ensembles of data and model reconstructions through the use of Variational Paired Autoencoders (VPAE), \cite{kingma2019introduction}. In this probabilistic setting, the latent variables $\mathbf{z}$ are now represented by random variables $\mathbf{z} \sim \mathcal{N}(\boldsymbol{\mu},\diag{\boldsymbol{\sigma}})$ characterized by mean $\boldsymbol{\mu}$ and $\boldsymbol{\sigma}$.

This experiment considers cross-well seismic imaging \cite{PrattCrossHoleTomo, CrossHoleSeisTomo} in transmission mode, meaning that seismic sources and receivers are in opposite wells or boreholes.
Consider the acoustic wave equation,
\begin{equation}
    \nabla^2 u(x,t)  - \frac{1}{v(x)^2} \partial_{tt} u(x,t) =  s(x,t),
\end{equation}
where a spatial and temporally dependent source term $s(x,t)$ generates a seismic wave field $u(x,t)$ in the p-wave velocity model $v(x)$. 
For each of the $30$ source locations ($15$ per borehole), we numerically solve the discretized wave equation using DeepWave \cite{richardson_alan_2023}, and the observed data are measured by $48$ receivers located in the borehole on the opposite side of the velocity model.  For readability, we use the same notation for continuous fields and their discretized representations, as all quantities are ultimately treated in discrete form for numerical simulation and learning. For a single velocity model, the observed data $\bfY = Q(u)$ contains observations for all time points from all $30$ sources/channels.
See \Cref{fig:seismic_data_models}(a) for five samples of the $640 \si{m}  \times 800 \si{m}$ velocity models, and see \Cref{fig:seismic_data_models}(b) for three of the 30 channels obtained from a single velocity model.

\begin{figure}[htbp]
    \centering
    \begin{subfigure}[b]{0.75\textwidth}
        \centering
        \includegraphics[width=\textwidth]{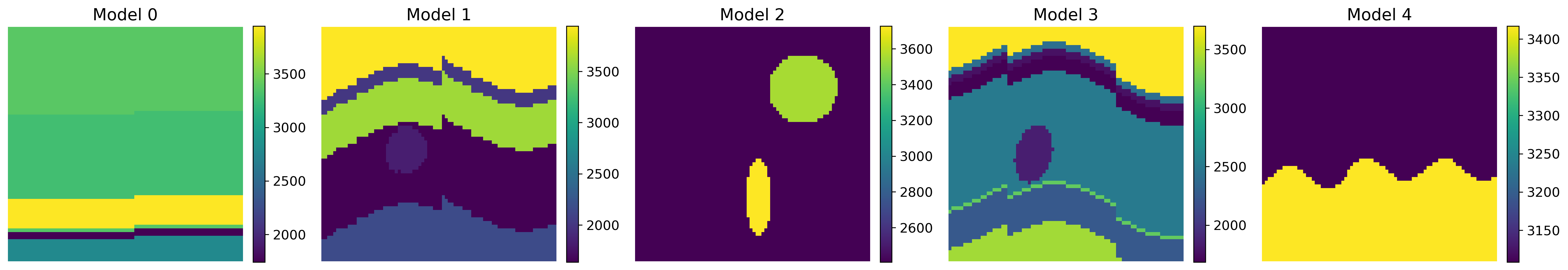}
        \caption{Five out of \num{2000} velocity models from the dataset.}
        \label{fig:subfig1}
    \end{subfigure}
    \\
    \begin{subfigure}[b]{0.75\textwidth}
        \centering
        \includegraphics[width=\textwidth]{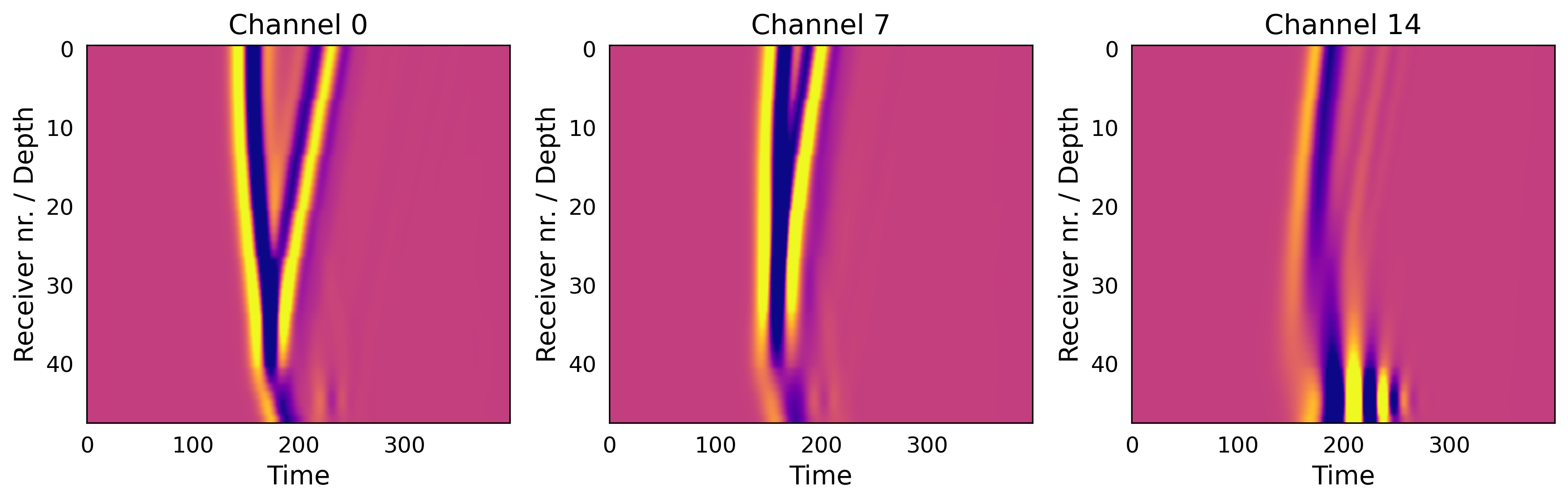}
        \caption{Full data for a subset of the sources, for a single velocity model.}
        \label{fig:subfig2}
    \end{subfigure}
    \caption{Seismic FWI example: (a) sample velocity models, (b) full (uncorrupted) data for three sources/channels.}
\label{fig:seismic_data_models}
\end{figure}

We generate training data containing corresponding data/velocity samples $\{\bfY^{\rm train}_i, \bfX^{\rm train}_i\}_{i=1}^{\num{2000}}$, where $\bfX^{\rm train}_i$ denotes a discretized velocity field.  We train the VPAE to minimize the combined loss given in \Cref{eq:PAIRloss}. 
Both the data and velocity models use mean-shift and standard-deviation scaling, where the normalization constants are computed globally over the full training set. The paired autoencoders are trained on complete data, while all inference uses data with missing receiver signals, denoted as $\mathbf{Y}_{\text{sub}} = P(\mathbf{Y})$. For one channel corresponding to one of the velocity models from the validation set, we provide in the top row of \Cref{fig:seismic_data_reconstruction} the full data $\mathbf{Y}$ and corrupted data $\mathbf{Y}_{\text{sub}}$ corresponding to $90\%$ random missing receivers. 

The VPAE design for this example employs a fully convolutional multi-level ResNet for both the data and the model encoder-decoder structures. 
Given the network state $\bfS_k$ at layer index $k$, the state is processed by an atrous spatial pyramid pooling (ASPP) block \cite{chen2017deeplab}, which contains convolutions, normalization, and non-linearity, and also changes the number of channels at the end of each of the three levels. After each level, the number of channels is doubled, while the spatial resolution is halved. To maintain the ResNet flow across levels, we also change the number of channels of the state via a $1\times1$ convolution, $\bfC_k \bfS_k$. We denote the identity or pooling/subsampling operator as $P_k$. Together the encoder design reads

\begin{equation}\label{eq:resnetblock}
    \bfS_{k+1} = P_k (\bfC_k \bfS_k - \operatorname{ASPP}_k(\bfS_k)).
\end{equation}

We use three levels, with three layers each, so $P_k$ is a pooling operator at $k=3,6,9$ and $P_k$ is equal to the identity map for all other indices. Similarly, $\bfC_k$ changes the number of channels of the state for layers $k=3,6,9$, while acting as the identity operator for all other indices. The number of channels is $64 \rightarrow 128 \rightarrow 256$ for the three levels.
The decoder follows the reverse design of \Cref{eq:resnetblock}. In the latent space, the variables originating from data and velocity models typically have different sizes and channel counts. The latent-space operators $m^{\to}$ and $m^{\gets}$ are linear transformations and translations that act on vectorized latent variables while preserving the batch dimension. 

We compare three approaches for inference from the corrupted data that all use (pieces of) the same pre-trained VPAE:

\begin{enumerate}

    \item \emph{PAIR:} we use the variational paired autoencoders alone, without any explicit treatment of the corrupted or missing observations. The velocity model is estimated directly via the learned surrogate inversion map by encoding the incomplete data and decoding through the paired latent mappings, as in \Cref{eq:PAIRinverse}.
    
    \item \emph{Model-based LSI (M-LSI):} we directly estimate a velocity model via solving the inverse problem, parameterized by the model decoder. See, e.g., \cite{bora2017compressed, pmlr-v119-asim20a} and \cite{NEURIPS20181bc2029a, mosser2020stochastic, zhu2022integrating, he2021reparameterized, IwakiriAutoSeismic} for deterministic and probabilistic applications to linear and nonlinear problems, respectively. 
    This approach is formulated in terms of the partial observation operator $P$ and the forward operator $A = Q \circ F$, where $F$ represents the velocity-to-wavefield map and $Q$ denotes the full observation operator.

    Using a maximum of $100$ L-BFGS iterations we compute
    \begin{equation}\label{eq:mslsi}
        \hat{\bfz}_x = \argmin_{\bfz_x} \| (P \circ A \circ d_x) (\bfz_x) - \bfy_\text{sub} \|_2^2, \quad \text{with} \quad \hat{\bfx} = d_x(\hat{\bfz}_x).
    \end{equation}
    Note that \Cref{eq:mslsi} defines a deterministic optimization problem, as it does not involve the probabilistic formulation of the encoder and propagates only through the decoder. As an initial guess in the latent space, we employ the model encoder $e_x$ to encode the pixel-wise mean velocity model from the training set. The numerical experiments solve the above problem for an ensemble of random initial guesses, generated by sampling the encoded mean velocity model, to obtain an ensemble of the QoI estimates. Note that the samples in this ensemble are not true posterior samples, since they are obtained by solving the deterministic optimization problem \Cref{eq:mslsi} rather than by sampling a Bayesian posterior; nevertheless, variability across the ensemble provides a qualitative indication of uncertainty in the estimates.
    
    \item \emph{PAIR + LSI:} we perform latent space inversion in the data space (\Cref{eq:latentcomp_zy_P}) using a maximum of
    $25$ iterations of L-BFGS to reconstruct missing data, $\hat \bfz_y \in \argmin_{\bfz_y} \left\| (P\circ d_y)(\bfz_y)- \bfy_\text{sub} \right\|$ followed by decoding (the estimated latent data variables) to the velocity model, i.e., $\hat \bfx = (d_x\circ m^{\gets}) (\hat \bfz_y)$. The initial guess is the encoding of the missing data, $e_y(\bfy_\text{sub})$. Because this approach uses the VPAE, the encoded missing data is sampled in the latent space, which yields an ensemble of data reconstructions and an ensemble of estimated velocity models.
\end{enumerate}

Regarding computational cost, Method $2$ stands out as computationally expensive because every iteration requires evaluating $A$ and its adjoint, which requires solving PDEs. Methods $1$ and $3$ rely solely on the surrogate models provided by the trained variational paired autoencoders.

First, in \Cref{fig:seismic_data_reconstruction}, we illustrate reconstructions of missing data observations from the $90\%$ incomplete dataset using PAIR+LSI.
The VPAE constructs an ensemble of data reconstructions $\hat \bfz_y$. For visualization purposes, we show five out of the \num{25} decoded samples of the data reconstructions $d_y(\hat \bfz_y)$, along with the average and standard deviation of the observation reconstructions. Note that while PAIR+LSI does not estimate $d_y(\hat \bfz_y)$, it could be seen as an intermediate quantity.
We observe that PAIR+LSI provides good reconstructions of the data from the missing receiver signals.  Along the same lines, we expect that velocity field reconstructions, $\hat \bfx = (d_x\circ m^{\gets}) (\hat \bfz_y)$, will yield accurate approximations of the true velocity fields.  

\begin{figure}
    \centering
    \includegraphics[width=1\linewidth]{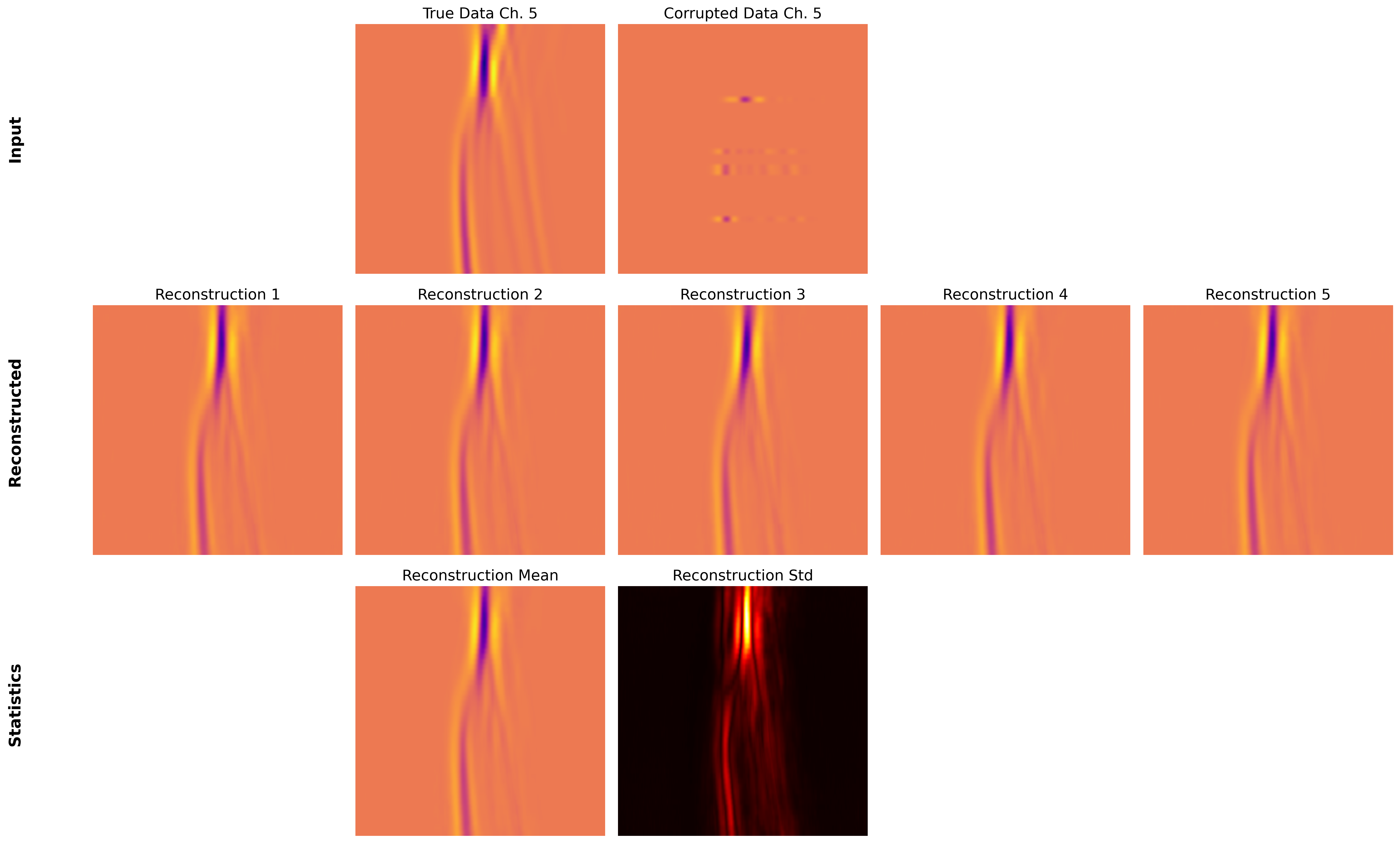}
    \caption{Top row: Observed data for one source, with full data (left) and with $90\%$ random missing receivers (right). Middle row: Five samples of data reconstructions using PAIR+LSI. Bottom row: Average and standard deviation of data reconstructions. Data reconstructions in the physical space appear as an intermediate quantity only.}
    \label{fig:seismic_data_reconstruction}
\end{figure}

Next, we investigate and compare velocity reconstructions obtained via inference from the corrupted data. In \Cref{fig:seismic_model_reconstruction}, we provide sample model reconstructions along with standard deviations (in the last row) for reconstructions from each of the estimation approaches, with the true model provided for comparison. 
We observe that for $90\%$ missing data, the velocity reconstructions are approximately blurred versions of the true model (see the last column of \Cref{fig:seismic_model_reconstruction}).  Meanwhile, both Methods 1 and 2 that directly use the corrupted data lead to models that are, to a large extent, fundamentally different from the true velocity model.  Although M-LSI with incomplete data does take into account the data acquisition mask, solving the non-convex optimization problem from samples of the encoding of the mean-model, in combination with extremely sparse data acquisition, leads to a clear data acquisition imprint and oftentimes little resemblance to the true model. We note that M-LSI has been successfully used in various applications, including seismic inversion, but those results typically used a reasonably accurate initial guess. Here, the dataset was designed so that the per-pixel mean of the velocity models was uniform everywhere, resulting in an initial guess that is far from the true model in most cases.

\begin{figure}
    \centering
    \includegraphics[width=0.55\linewidth]{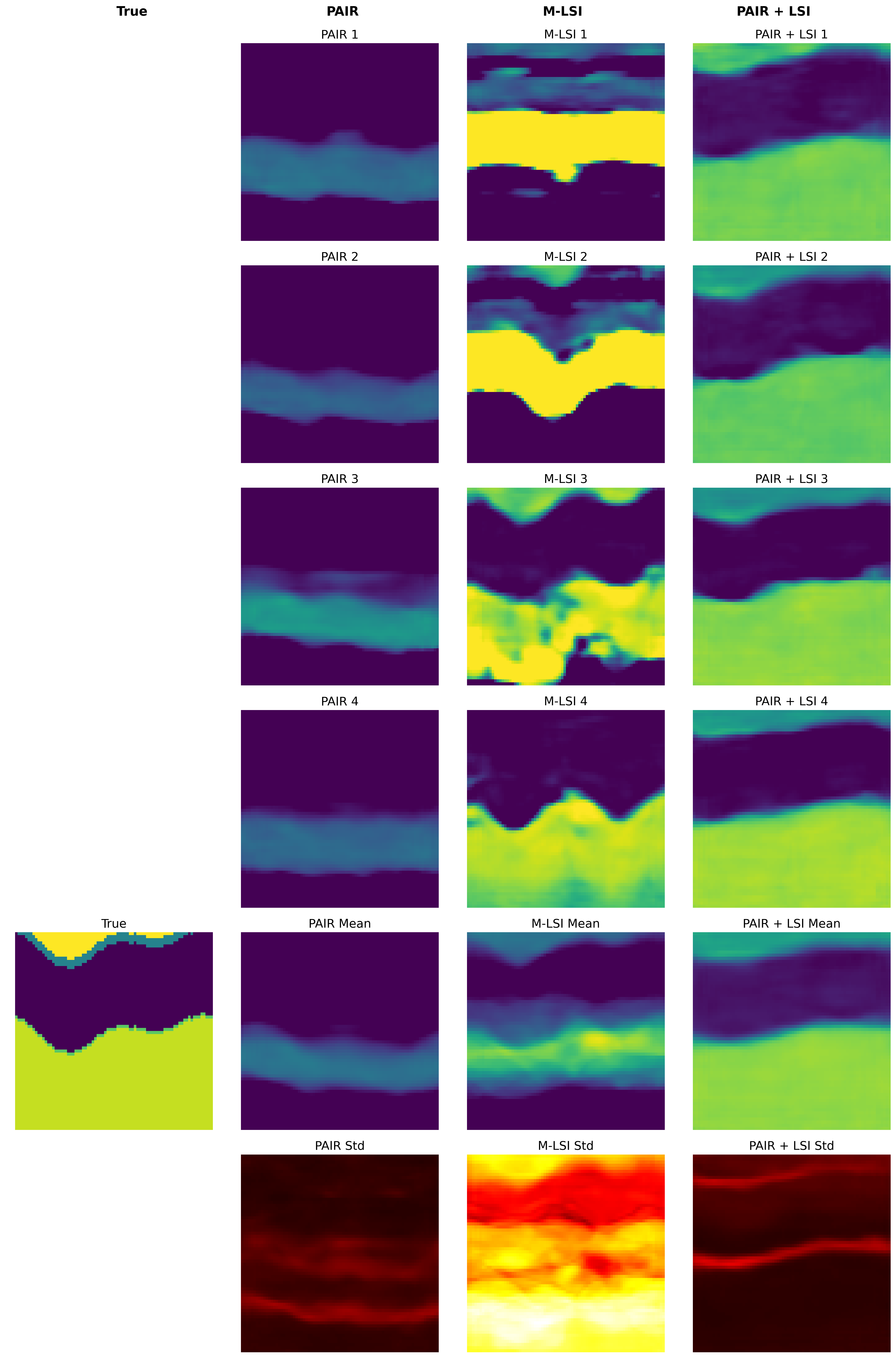}
    \caption{Seismic FWI velocity reconstructions from corrupted data with $90\%$ missing receivers. Left column: One example of a velocity model from the validation set.  Second column (PAIR): A few samples generated by the VPAE from directly inputting corrupted data, as shown in Figure \ref{fig:seismic_data_reconstruction}. Third column (M-LSI): Result of estimating a model using a model decoder, starting from samples of the encoding of the mean velocity model from the training set. Fourth column: (PAIR+LSI): A few samples generated by the VPAE from data that was reconstructed using the same VPAE via LSI. The last row corresponds to the standard deviations for the reconstructions.}
    \label{fig:seismic_model_reconstruction}
\end{figure}

\begin{figure}
    \centering
    \includegraphics[width=1\linewidth]{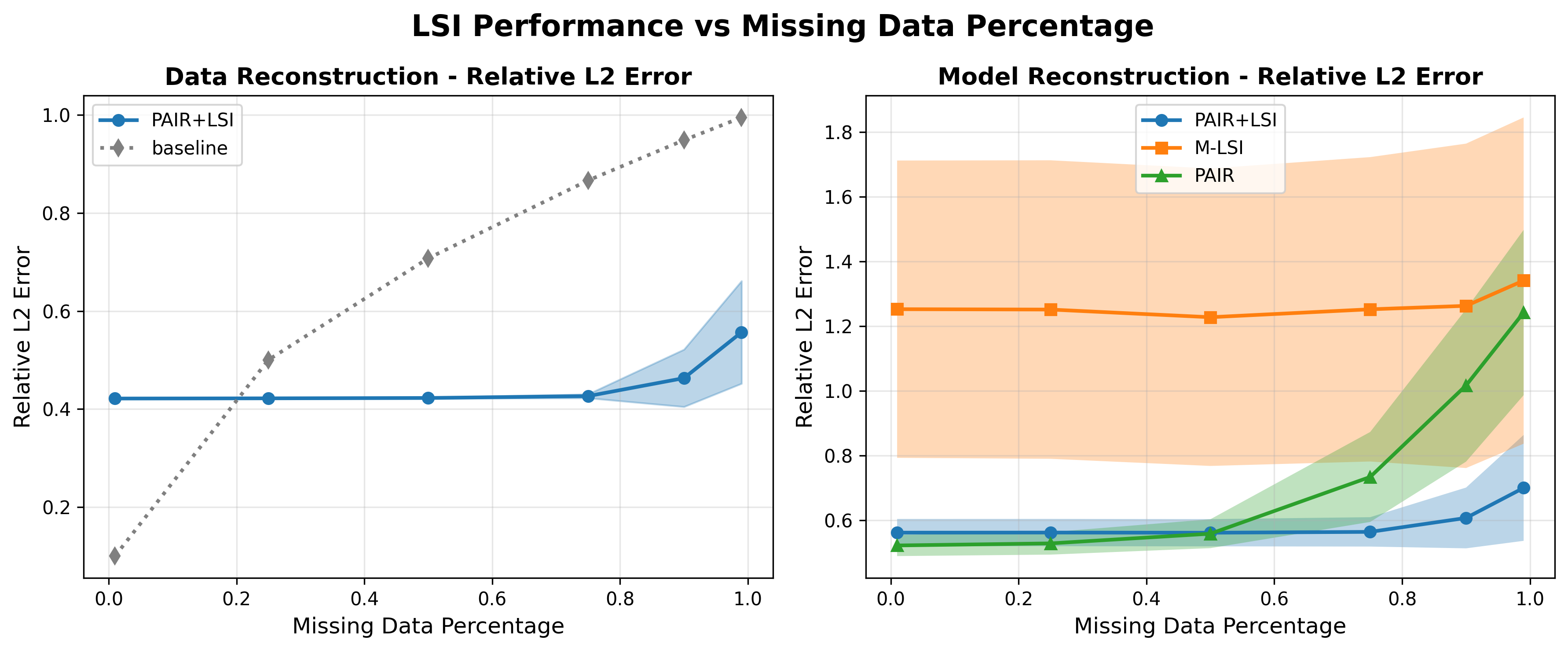}
    \caption{For different percentages of missing data in the seismic example, we provide error statistics for the data reconstructions (left plot) and the velocity model reconstructions (right plot). The data baseline error is the difference between observed and full data.  The statistics are averaged over $100$ velocity models, where we generate $25$ samples per velocity model. Plotted errors are based on the sample mean of the errors (smaller numbers are better). Shading indicates the mean $\pm 1$ standard deviation.}
    \label{fig:LSI_metrics_seismic}
\end{figure}

Next, we investigate the performance of the inference approaches for different percentages of missing data. That is, we vary the amount of missing observations and provide in \Cref{fig:LSI_metrics_seismic} average reconstruction errors for data reconstructions and model reconstructions for PAIR+LSI. From the left plot, we observe that the error in the reconstructed data is near constant for moderately corrupted data, while the mean and standard deviation of the reconstructed data error increases more rapidly for highly corrupted data. The reconstructed data error is not zero for small amounts of missing data, because the data decoder $d_x$ is not able to reconstruct the observed data exactly, either due to a lack of training data, or an insufficiently expressive decoder. 
In the right plot, we provide model reconstruction errors for PAIR+LSI for different percentages of missing data, with the corresponding results for PAIR and M-LSI provided for comparison.  Model reconstruction statistics show that PAIR performs similarly to PAIR+LSI for a moderate amount of missing data, whereas when few data are available (i.e., larger percentages of missing data), PAIR+LSI outperforms the other methods. M-LSI performs poorly in general, which is primarily due to the absence of a good initial guess in the latent space, contrary to the PAIR and PAIR+LSI approaches.

\section{Conclusions}
\label{sec:conclusion}
In this work, we describe a new approach for latent space inference that can handle observational inconsistencies or missing data.  By exploiting a paired autoencoder framework and performing inversion in the latent space, we show that more accurate reconstructions can be obtained for problems with missing observational data.
We extend the capabilities of the PAIR framework to handle missing data, and we show that the PAIR+LSI results are superior to direct mapping via PAIR, as well as encoder-decoder networks of incomplete data, especially if the incomplete data are different from those used for training.

Future work includes exploring the use of latent space inference with PAIR for optimal experimental design (e.g., identifying the important angles to take measurements or the important sensors for sensor placement).  In contrast to works like \cite{siddiqui2024deep} that combine machine learning techniques with OED to avoid bilevel optimization problems, latent space inversion for the inner problem would provide a fast reconstruction approximation for different missing angles. Latent space inference could also be used for timely solves in hyperparameter estimation or in scenarios with the forward model or noise model is different from the assumptions.

\printbibliography

\appendix

\end{document}